\title{Variational Model-based Policy Optimization}
\author{%
  \begin{tabular}{c@{\hspace*{.8cm}}c}
  {\bf Yinlam Chow} & {\bf Brandon Cui} \\[.1cm]
  {\normalfont Google Research} & {\normalfont Facebook AI Research} \\[.1cm]
  {\normalfont yinlamchow@google.com} & {\normalfont bcui@fb.com} \\[.5cm]
   {\bf Moonkyung Ryu} & {\bf Mohammad Ghavamzadeh}\\[.1cm]
  {\normalfont Google Research} & {\normalfont Google Research}\\[.1cm]
  {\normalfont mkryu@google.com} & {\normalfont ghavamza@google.com}\\[.1cm]
  \end{tabular}
}
\newcommand{\Bo}[1]{}
\newcommand{\comment}[1]{}
\newtheorem{theorem}{Theorem}
\newtheorem{lemma}[theorem]{Lemma}
\newtheorem{corollary}[theorem]{Corollary}
\def\eqref#1{equation~\ref{#1}}
\def\1{\bm{1}}
\DeclareMathAlphabet{\mathsfit}{\encodingdefault}{\sfdefault}{m}{sl}
\SetMathAlphabet{\mathsfit}{bold}{\encodingdefault}{\sfdefault}{bx}{n}
\DeclareMathOperator*{\argmax}{arg\,max}
\DeclareMathOperator*{\argmin}{arg\,min}
\newcommand{\control}{a}
\newcommand{\obs}{x}
\renewcommand{\widehat}{\hat}
\newtheorem{remark}{Remark}
\newtheorem{proposition}{Proposition}
\begin{document}

\maketitle

\begin{abstract}
Model-based reinforcement learning (RL) algorithms allow us to combine model-generated data with those collected from interaction with the real system in order to alleviate the data efficiency problem in RL. However, designing such algorithms is often challenging because the bias in simulated data may overshadow the ease of data generation. A potential solution to this challenge is to jointly learn and improve model and policy using a universal objective function. In this paper, we leverage the connection between RL and probabilistic inference, and formulate such an objective function as a variational lower-bound of a log-likelihood. This allows us to use expectation maximization (EM) and iteratively fix a baseline policy and learn a variational distribution, consisting of a model and a policy (E-step), followed by improving the baseline policy given the learned variational distribution (M-step). We propose model-based and model-free policy iteration (actor-critic) style algorithms for the E-step and show how the variational distribution learned by them can be used to optimize the M-step in a fully model-based fashion. Our experiments on a number of continuous control tasks show that despite being more complex, our model-based (E-step) algorithm, called {\em variational model-based policy optimization} (VMBPO), is more sample-efficient and robust to hyper-parameter tuning than its model-free (E-step) counterpart. Using the same control tasks, we also compare VMBPO with several state-of-the-art model-based and model-free RL algorithms and show its sample efficiency and performance. 
% While model-based algorithms alleviates the prevalent data inefficiency issue in reinforcement learning (RL), designing effective algorithms is oftentimes challenging because the model-generated data may contain bias (from the  dynamics model which is learned independently of RL) that may hinder policy learning. In this paper, we introduce a new algorithm that jointly learn the model and policy based on a universal RL objective. Leveraging the connection between reinforcement learning and probabilistic inference, we propose a novel probabilistic inference framework for RL that models both the dynamics model and policy. Applying variational expectation-maximization (EM) to this framework, we thus show that the model-based actor-critic algorithm can be reduced to an iterative EM algorithm, with variational model and policy learning equivalent to an E-step and policy improvement to an M-step. This algorithm has the main advantages of effective data generation and improved policy performance both due to the learned dynamics model that is optimized for RL. To demonstrate the effectiveness of our algorithm, we compare it with several state-of-the-art baselines on continuous RL experiments and show that our method has better sample-efficiency, faster convergence, and more robustness to hyper-parameter settings.
\end{abstract}
\vspace{-0.15in}

%%%%%%%%%%%%%%%%%%%%%%%%%%%%%%%%%%%%%%%%%%%%%%%%%%%%%%%%%%%%%%%%%%%%%%%%%%
%%%%%%%%%%%%%%%%%%%%%%%%%%%%%%%%%%%%%%%%%%%%%%%%%%%%%%%%%%%%%%%%%%%%%%%%%%
%%%%%%%%%%%%%%%%%%%%%%%%%%%%%%%%%%%%%%%%%%%%%%%%%%%%%%%%%%%%%%%%%%%%%%%%%%
%%%%%%%%%%%%%%%%%%%%%%%%%%%%%%%%%%%%%%%%%%%%%%%%%%%%%%%%%%%%%%%%%%%%%%%%%%
%%%%%%%%%%%%%%%%%%%%%%%%%%%%%%%%%%%%%%%%%%%%%%%%%%%%%%%%%%%%%%%%%%%%%%%%%%

\vspace{-0.1in}
\section{Introduction}
\label{sec:intro}
\vspace{-0.1in}

Model-free reinforcement learning (RL) algorithms that learn a good policy without constructing an explicit model of the system's dynamics have shown promising results in complex simulated problems~\citep{Mnih13PA,Mnih15HL,Schulman15TR,Haarnoja18SA}. However, these methods are not sample efficient, and thus, not suitable for problems in which data collection is burdensome. Model-based RL algorithms address the data efficiency issue of the model-free methods by learning a model, and combining model-generated data with those collected from interaction with the real system~\citep{Janner19WT}. However, designing model-based RL algorithms is often challenging because the bias in model may affect the process of learning policies and result in worse asymptotic performance than the model-free counterparts. A potential solution to this challenge is to incorporate the policy/value optimization method in the process of learning the model~\citep{Farahmand18IV,Abachi20PA}. An ideal case here would be to have a universal objective function that is used to learn and improve model and policy jointly. 

Casting RL as a probabilistic inference has a long history~\citep{Todorov08GD,Toussaint09RT,Kappen12OC,Rawlik13SO}. This formulation has the advantage that allows powerful tools for approximate inference to be employed in RL. One such class of tools are variational techniques~\citep{Hoffman13VI} that have been successfully used in RL~\citep{Neumann11VI,Levine13VP,Abdolmaleki18MP}. Another formulation of RL with strong connection to probabilistic inference is the formulation of policy search as an expectation maximization (EM) style algorithm~\citep{Dayan97EM,Peters07RL,Peters10RE,Neumann11VI,Chebotar17PI,Abdolmaleki18MP}. The main idea here is to write the expected return of a policy as a (pseudo)-likelihood function, and then assuming success in maximizing the return, finding the policy that most likely would have been taken. Another class of RL algorithms that are related to the RL as inference formulation are entropy-regularized algorithms that add an entropy term to the reward function and find the soft-max optimal policy~\citep{Levine14LC,Levine14LN,Nachum17BG,Nachum18PC,Haarnoja18SA,fellows2019virel}. For a comprehensive tutorial on RL as probabilistic inference, we refer readers to~\cite{Levine18RL}. 

%Methods that have been developed based on formulating RL as probabilistic inference can also be divided to model-free and model-based. Methods such as~\cite{Todorov08GD,Toussaint09RT,Levine13VP,Chebotar17PI} that assume access to linearized dynamics, and those like~\cite{Kappen12OC} that use the overall dynamics belong to the model-based category. On the other hand, model-free methods include~\cite{Peters10RE,Neumann11VI,Daniel16HR}

In this paper, we leverage the connection between RL and probabilistic inference, and formulate an objective function for jointly learning and improving model and polciy as a variational lower-bound of a log-likelihood. This allows us to use EM, and iteratively fix a baseline policy and learn a variational distribution, consisting of a model and a policy (E-step), followed by improving the baseline policy given the learned variational distribution (M-step). We propose model-based and model-free policy iteration (PI) style algorithms for the E-step and show how the variational distribution that they learn can be used to optimize the M-step, only from model-generated samples. Both algorithms are model-based but they differ in using model-based and model-free algorithms for the E-step. 
Our experiments on a number of continuous control tasks show that although our algorithm that uses model-based PI for the E-step, which we call it {\em variational model-based policy optimization} (VMBPO), is more complex than its model-free counterpart, it is more sample-efficient and robust to hyper-parameter tuning. Using the same control tasks, we also compare VMBPO with several state-of-the-art model-based and model-free RL algorithms, including model-based policy optimization (MBPO)~\cite{Janner19WT} and maximum a posteriori policy optimization (MPO)~\cite{Abdolmaleki18MP}, and show its sample efficiency and performance.

%%%%%%%%%%%%%%%%%%%%%%%%%%%%%%%%%%%%%%%%%%%%%%%%%%%%%%%%%%%%%%%%%%%%%%%%%%
%%%%%%%%%%%%%%%%%%%%%%%%%%%%%%%%%%%%%%%%%%%%%%%%%%%%%%%%%%%%%%%%%%%%%%%%%%
%%%%%%%%%%%%%%%%%%%%%%%%%%%%%%%%%%%%%%%%%%%%%%%%%%%%%%%%%%%%%%%%%%%%%%%%%%
%%%%%%%%%%%%%%%%%%%%%%%%%%%%%%%%%%%%%%%%%%%%%%%%%%%%%%%%%%%%%%%%%%%%%%%%%%
%%%%%%%%%%%%%%%%%%%%%%%%%%%%%%%%%%%%%%%%%%%%%%%%%%%%%%%%%%%%%%%%%%%%%%%%%%

\vspace{-0.1in}
\section{Preliminaries}
\label{sec:prelim}
\vspace{-0.1in}

We study the reinforcement learning (RL) problem~\citep{Sutton18RL} in which the agent's interaction with the environment is modeled as a discrete-time Markov decision process (MDP) $\mathcal M = \langle \mathcal X, \mathcal A, r, p, p_0 \rangle$, where $\mathcal X$ and $\mathcal A$ are state and action spaces; $r:\mathcal X\times\mathcal A\rightarrow\mathbb R$ is the reward function; $p:\mathcal X\times\mathcal A\rightarrow\Delta_{\mathcal X}$ ($\Delta_{\mathcal X}$ is the set of probability distributions over $\mathcal X$) is the transition kernel; and $p_0:\mathcal X\rightarrow\Delta_{\mathcal X}$ is the initial state distribution. A stationary Markovian policy $\pi:\mathcal X\rightarrow\Delta_{\mathcal A}$ is a probabilistic mapping from states to actions. Each policy $\pi$ is evaluated by its {\em expected return}, i.e.,~$J(\pi) = \mathbb E[\sum_{t=0}^{T-1} r(x_t,a_t)\mid p_0,p,\pi]$, where $T$ is the {\em stopping time}, i.e.,~the random variable of hitting a {\em terminal state}.\footnote{Similar to~\citet{Levine18RL}, our setting can be easily extended to infinite-horizon $\gamma$-discounted MDPs. This can be done by modifying the transition kernels, such that any action transitions the system to a terminal state with probability $1-\gamma$, and all standard transition probabilities are multiplied by $\gamma$.} We denote by ${\mathcal X}^0$ the set of all terminal states. The agent's goal is to find a policy with maximum expected return, i.e,~$\pi^*\in\argmax_{\pi\in\Delta_{\mathcal A}} J(\pi)$. We denote by $\xi=(x_0,a_0,\ldots,x_{T-1},a_{T-1},x_T)$, a system trajectory of lenght $T$, whose probability under a policy $\pi$ is defined as $p_\pi(\xi)=p_0(x_0)\prod_{t=0}^{T-1}\pi(a_t|x_t)p(x_{t+1}|x_t,a_t)$. Finally, we define $[T] := \{0,\ldots,T-1\}$.

\vspace{-0.1in}
\section{Policy Optimization as Probabilistic Inference}
\label{sec:RL-Inference}
\vspace{-0.1in}

Policy search in reinforcement learning (RL) can be formulated as a probabilistic inference problem (e.g.,~\citealt{Todorov08GD,Toussaint09RT,Kappen12OC,Levine18RL}). The goal in the conventional RL formulation is to find a policy whose generated trajectories maximize the expected return. In contrast, in the inference formulation, we start with a prior over trajectories and then estimate the posterior conditioned on a desired outcome, such as reaching a goal state. In this formulation, the notion of a desired (optimal) outcome is introduced via {\em independent} binary random variables $\mathcal O_t,\;t\in[T]$, where $\mathcal O_t=1$ denotes that we acted optimally at time $t$. The likelihood of $\mathcal O_t$, given the state $x_t$ and action $a_t$, is modeled as $p(\mathcal O_t=1\mid x_t,a_t) = \exp(\eta\cdot r(x_t,a_t))$, where $\eta>0$ is a temperature parameter. This allows us to define the log-likelihood of a policy $\pi$ being optimal as 

\vspace{-0.15in}
\begin{small}
\begin{equation}
\label{eq:likelihood-policy}
\log p_\pi({\mathcal O}_{0:T-1}=1) = \log \int_\xi p_\pi({\mathcal O}_{0:T-1}=1,\xi) = \log\mathbb E_{\xi\sim p_\pi}\big[p({\mathcal O}_{0:T-1}=1\mid\xi)\big],
\end{equation}
\end{small}
\vspace{-0.15in}

where $p({\mathcal O}_{0:T-1}=1\mid\xi)$ is the optimality likelihood of trajectory $\xi$ and is defined as

\vspace{-0.15in}
\begin{small}
\begin{equation}
\label{eq:likelihood-traj}
p({\mathcal O}_{0:T-1}=1\mid\xi) = \prod_{t=0}^{T-1}p(\mathcal O_t=1\mid x_t,a_t) = \exp\big(\eta\cdot\sum_{t=0}^{T-1}r(x_t,a_t)\big).
\end{equation}
\end{small}
\vspace{-0.15in}

As a result, finding an optimal policy in this setting would be equivalent to maximizing the log-likelihood in Eq.~\ref{eq:likelihood-policy}, i.e.,~$\pi^*_{\text{soft}}\in\argmax_{\pi}\log p_\pi({\mathcal O}_{0:T-1}=1)$.

A potential advantage of formulating RL as an inference problem is the possibility of using a wide range of approximate inference algorithms, including variational methods. In variational inference,  %inference~\citep{Hoffman13VI}, 
we approximate a distribution $p(\cdot)$ with a potentially simpler (e.g.,~tractable factored) distribution $q(\cdot)$ in order to make the whole inference process more tractable. If we approximate $p_\pi(\xi)$ with a variational distribution $q(\xi)$, we will obtain the following variational lower-bound for the log-likelihood in Eq.~\ref{eq:likelihood-policy}:

\vspace{-0.175in}
\begin{small}
\begin{align}
\log p_\pi(&{\mathcal O}_{0:T-1}=1) = \log \mathbb E_{\xi\sim p_\pi}\big[\exp\big(\eta\cdot\sum_{t=0}^{T-1}r(x_t,a_t)\big)\big] = \log \mathbb E_{\xi\sim q(\xi)}\big[\frac{p_\pi(\xi)}{q(\xi)}\cdot \exp\big(\eta\cdot\sum_{t=0}^{T-1}r(x_t,a_t)\big)\big] \nonumber \\
&\stackrel{\text{(a)}}{\geq} \mathbb E_{\xi\sim q(\xi)}\big[\log\frac{p_\pi(\xi)}{q(\xi)} + \eta\sum_{t=0}^{T-1}r(x_t,a_t)\big] = \eta\cdot\mathbb E_q\big[\sum_{t=0}^{T-1}r(x_t,a_t)\big] - \text{KL}(q||p_\pi) := \mathcal J(q;\pi),
\label{eq:ELBO1}
\end{align}
\end{small}
\vspace{-0.125in}

{\bf (a)} is from Jensen's inequality, and $\mathcal J(q;\pi)$ is the evidence lower-bound (ELBO) of the log-likelihood function. A variety of algorithms have been proposed (e.g.,~\citealt{Peters07RL,Hachiya09ES,Neumann11VI,Levine13VP,Abdolmaleki18MP,fellows2019virel}), whose main idea is to approximate $\pi^*_{\text{soft}}$ by maximizing $\mathcal J(q;\pi)$ w.r.t.~both $q$ and $\pi$. This often results in an expectation-maximization (EM) style algorithm in which we first fix $\pi$ and maximize $\mathcal J(\cdot;\pi)$ for $q$ (E-step), and then for the $q$ obtained in the E-step, we maximize $\mathcal J(q;\cdot)$ for $\pi$ (M-step). %Variational policy search algorithms~\citep{Neumann11VI,Levine13VP} also belong to this category. 

\vspace{-0.1in}
\section{Variational Model-based Policy Optimization}
\label{sec:elbo}
\vspace{-0.1in}

In this section, we describe the ELBO objective function used by our algorithms, study the properties of the resulted optimization problem, and propose algorithms to solve it. We propose to use the variational distribution $q(\xi)=p_0(x_0)\prod_{t=0}^{T-1}q_c(a_t|x_t)q_d(x_{t+1}|x_t,a_t)$ to approximate $p_\pi(\xi)$. Note that $q$ shares the same initial state distribution as $p_\pi$, but has different control strategy (policy), $q_c$, and dynamics, $q_d$. Using this variational distribution, we may write the ELBO objective of (\ref{eq:ELBO1}) as

\vspace{-0.15in}
\begin{small}
\begin{equation}
\label{eq:ELBO2}
\mathcal J(q;\pi) = \mathbb E_q\big[\sum_{t=0}^{T-1} \eta\cdot r(x_t,a_t) - \log\frac{q_c(a_t|x_t)}{\pi(a_t|x_t)} - \log\frac{q_d(x_{t+1}|x_t,a_t)}{p(x_{t+1}|x_t,a_t)}\big], \;\; \text{where } \; \mathbb E_q[\cdot]:=\mathbb E[\cdot|p_0,q_d,q_c].
\end{equation}
\end{small}
\vspace{-0.15in}

% \vspace{-0.15in}
% \begin{small}
% \begin{align}
% \mathcal J(q;\pi) &= \eta\cdot\mathbb E_q\big[\sum_{t=0}^{T-1} r(x_t,a_t)\big] - \mathbb E_q\big[\log \prod_{t=0}^{T-1}\frac{ q_c(a_t|x_t)}{\pi(a_t|x_t)}\cdot\frac{q_d(x_{t+1}|x_t,a_t)}{p(x_{t+1}|x_t,a_t)}\big] \nonumber \\
% &= \mathbb E_q\big[\sum_{t=0}^{T-1} \eta\cdot r(x_t,a_t) - \log\frac{q_c(a_t|x_t)}{\pi(a_t|x_t)} - \log\frac{q_d(x_{t+1}|x_t,a_t)}{p(x_{t+1}|x_t,a_t)}\big].
% \label{eq:ELBO2}
% \end{align}
% \end{small}
% \vspace{-0.15in}

%Note that in (\ref{eq:ELBO2}), $\mathbb E_q[\cdot]=\mathbb E[\cdot|p_0,q_d,q_c]$. In order 

To maximize $\mathcal J(q;\pi)$ w.r.t.~$q$ and $\pi$, we first fix $\pi$ and compute the variational distribution {\bf (E-step)}:

\vspace{-0.15in}
\begin{small}
\begin{align}
q^*=(q_c^*,q_d^*)\in \! \argmax_{q_c\in\Delta_{\mathcal A},q_d\in\Delta_{\mathcal X}} \! \mathbb E\big[\sum_{t=0}^{T-1}\eta\cdot r(\obs_t,\control_t) - \log\frac{q_c(a_t|x_t)}{\pi(a_t|x_t)} - \log\frac{q_d(x_{t+1}|x_t,a_t)}{p(x_{t+1}|x_t,a_t)}\mid p_0,q_d,q_c\big],
\label{eq:E-Step1}
\end{align}
\end{small}
\vspace{-0.15in}

and then optimize $\pi$ given $q^*=(q_c^*,q_d^*)$, i.e.,~$\argmax_\pi\mathcal J(q^*;\pi)$ {\bf (M-step)}. Note that in (\ref{eq:E-Step1}), $q_c^*$ and $q_d^*$ are both functions of $\pi$, but we remove $\pi$ from the notation to keep it lighter.

% \begin{remark}
% Note that if the policy space considered in the M-step, $\Pi$, is the entire simplex $\Delta_{\mathcal A}$, or more generally contains the policy space used for $q_c$ in the E-step, then we can trivially solve the M-step by setting $\pi=q_c^*$. 
% \end{remark}

\begin{remark}
In our formulation (choice of the variational distribution $q$), the M-step is independent of the true dynamics, $p$, and thus, can be implemented offline (using samples generated by the model $q_d$). Moreover, as will be seen in Section~\ref{sec:RL-algo}, we also use the model, $q_d$, in the E-step. As discussed throughout the paper, using simulated samples (from $q_d$) and reducing the need for real samples (from $p$) is an important feature of our proposed model-based formulation and algorithm.
\end{remark}

\begin{remark}
There are similarities between our variational formulation and the one used in the maximum a posteriori policy optimization (MPO) algorithm~\citep{Abdolmaleki18MP}. However, MPO sets its variational dynamics, $q_d$, to the dynamics of the real system, $p$, which results in a model-free algorithm, while our approach is model-based, since we learn $q_d$ and use it to generate samples in both E-step and M-step of our algorithms. 
\end{remark}

In the rest of this section, we study the E-step optimization (\ref{eq:E-Step1}) and propose algorithms to solve it.

%%%%%%%%%%%%%%%%%%%%%%%%%%%%%%%%%%%%%%%%%%%%%%%%%%%%%%%%%%%%%%%%%%%%%%%%%%
%%%%%%%%%%%%%%%%%%%%%%%%%%%%%%%%%%%%%%%%%%%%%%%%%%%%%%%%%%%%%%%%%%%%%%%%%%
%%%%%%%%%%%%%%%%%%%%%%%%%%%%%%%%%%%%%%%%%%%%%%%%%%%%%%%%%%%%%%%%%%%%%%%%%%

\vspace{-0.1in}
\subsection{Properties of the E-step Optimization}
\label{subsec:E-step-properties}
\vspace{-0.1in}

We start by defining two Bellman-like operators related to the E-step optimization (\ref{eq:E-Step1}). For any variational policy $q_c\in\Delta_{\mathcal A}$ and any value function $V:\mathcal X\rightarrow \mathbb R$, such that $V(x)=0,\;\forall x\in{\mathcal X}^0$, we define the {\em $q_c$-induced operator} and the {\em optimal operator} as 

\vspace{-0.15in}
\begin{small}
\begin{align}
\label{eq:qc-Operator}
{\mathcal T}_{q_c}[V](x) &:= \mathbb E_{a\sim q_c(\cdot|x)}\Big[\eta\cdot r(x,a) - \log\frac{q_c(a|x)}{\pi(a|x)} +\! \max_{q_d\in\Delta_{\mathcal X}}\mathbb E_{x'\sim q_d(\cdot|x,a)}\big[V(x') - \log\frac{q_d(x'|x,a)}{p(x'|x,a)}\big]\Big], \\
{\mathcal T}[V](x) &:= \max_{q_c\in\Delta_{\mathcal A}}{\mathcal T}_{q_c}[V](x).
\label{eq:optimal-Operator}
\end{align}
\end{small}
\vspace{-0.15in}

%for all $x\in\mathcal X\backslash{\mathcal X}^0$ and ${\mathcal T}_{q_c}[f](x) := 0,\;\forall x\in{\mathcal X}^0$. 
%and the {\em optimal operator} as ${\mathcal T}[V](x):=\max_{q_c\in\Delta_{\mathcal A}}{\mathcal T}_{q_c}[V](x)$. 
We also define the {\em optimal value function} of the E-step, $V_\pi$, as 

\vspace{-0.15in}
\begin{small}
\begin{equation}
V_\pi(x) := \mathbb E\big[\sum_{t=0}^{T-1}\eta\cdot r(\obs_t,\control_t)-\log\frac{q^*_c(a_t|x_t)}{\pi(a_t|x_t)}-\log\frac{q^*_d(x_{t+1}|x_t,a_t)}{p(x_{t+1}|x_t,a_t)}\mid p_0,q^*_d,q^*_c\big].
\label{eq:optimal-V-E-step}
\end{equation}
\end{small}
\vspace{-0.125in}

For any value function $V$, we define its associated action-value function $Q:\mathcal X\times\mathcal A\rightarrow \mathbb R$ as 

\vspace{-0.125in}
\begin{small}
\begin{equation}
\label{eq:Q-function}
Q(x,a):=\eta\cdot r(\obs,\control)+\log\mathbb E_{x'\sim p(\cdot|x,a)}\big[\exp\big(V(x')\big)\big].
\end{equation}
\end{small}
\vspace{-0.15in}

We now prove (in Appendices~\ref{sec:prelim_tech_results} and~\ref{appendix:lem:fixed_point}) the following lemmas about the properties of these operators, ${\mathcal T}_{q_c}$ and $\mathcal T$, and their relation with the (E-step) optimal value function, $V_\pi$. 

\vspace{0.05in}

\begin{lemma}
\label{lem:optimal-operator1}
We may rewrite the $q_c$-induced and optimal operators defined by (\ref{eq:qc-Operator}) and (\ref{eq:optimal-Operator}) as

\vspace{-0.15in}
\begin{small}
\begin{align}
\label{eq:induced-operator1}
{\mathcal T}_{q_c}[V](x) &= \mathbb E_{a\sim q_c(\cdot|x)}\big[Q(x,a) - \log\frac{q_c(a|x)}{\pi(a|x)}\big], \\
{\mathcal T}[V](x) &= \log \mathbb E_{a\sim\pi(\cdot|x),x'\sim p(\cdot|x,a)}\big[\exp\big(\eta\cdot r(x,a)+V(x')\big)\big].
\label{eq:optimal-operator1}
\end{align}
\end{small}
\end{lemma}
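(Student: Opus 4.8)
The plan is to reduce both identities to a single variational fact, the Gibbs variational principle (equivalently the Donsker–Varadhan representation of the log-partition function): for any reference distribution $\mu$ over a space $\mathcal{Z}$ and any measurable $g$ with $\mathbb{E}_{\mu}[e^{g}]<\infty$,
\begin{small}
\begin{equation*}
\max_{q\in\Delta_{\mathcal{Z}}}\Big\{\mathbb{E}_{z\sim q}[g(z)]-\text{KL}(q\,\|\,\mu)\Big\}=\log\mathbb{E}_{z\sim\mu}[e^{g(z)}],
\end{equation*}
\end{small}
attained uniquely at $q^*(z)\propto\mu(z)\,e^{g(z)}$. This is a preliminary technical result (see Appendix~\ref{sec:prelim_tech_results}); it follows by rewriting the objective as $\log\mathbb{E}_{\mu}[e^{g}]-\text{KL}(q\,\|\,q^*)$ and invoking nonnegativity of the KL divergence, with equality exactly at $q=q^*$.

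For the first identity (\ref{eq:induced-operator1}), I would apply this principle to the inner maximization in the definition (\ref{eq:qc-Operator}), taking $\mu=p(\cdot|x,a)$, $g=V$, and $q=q_d(\cdot|x,a)$, after observing that
\begin{small}
\begin{equation*}
\mathbb{E}_{x'\sim q_d}\big[V(x')-\log\tfrac{q_d(x'|x,a)}{p(x'|x,a)}\big]=\mathbb{E}_{x'\sim q_d}[V(x')]-\text{KL}\big(q_d(\cdot|x,a)\,\|\,p(\cdot|x,a)\big).
\end{equation*}
\end{small}
The maximum over $q_d$ therefore equals $\log\mathbb{E}_{x'\sim p(\cdot|x,a)}[e^{V(x')}]$, and adding the $\eta\cdot r(x,a)$ term reproduces exactly $Q(x,a)$ from definition (\ref{eq:Q-function}). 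Substituting this back into (\ref{eq:qc-Operator}) yields (\ref{eq:induced-operator1}).

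For the second identity (\ref{eq:optimal-operator1}), I would start from (\ref{eq:optimal-Operator}) and plug in the form (\ref{eq:induced-operator1}) just derived, rewriting the $q_c$-objective as $\mathbb{E}_{a\sim q_c}[Q(x,a)]-\text{KL}(q_c(\cdot|x)\,\|\,\pi(\cdot|x))$. Applying the Gibbs principle a second time, now with $\mu=\pi(\cdot|x)$, $g=Q(x,\cdot)$, and $q=q_c(\cdot|x)$, gives $\mathcal{T}[V](x)=\log\mathbb{E}_{a\sim\pi(\cdot|x)}[e^{Q(x,a)}]$. Finally I would expand $e^{Q(x,a)}=e^{\eta\cdot r(x,a)}\,\mathbb{E}_{x'\sim p(\cdot|x,a)}[e^{V(x')}]$ using (\ref{eq:Q-function}) and fold the inner expectation into the outer one via the tower property, so that the logarithm contains $\mathbb{E}_{a\sim\pi,\,x'\sim p}[\exp(\eta\cdot r(x,a)+V(x'))]$, which is (\ref{eq:optimal-operator1}).

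The computation is routine once the right tool is identified; the only points needing care are (i) recognizing that both $\log\mathbb{E}[\exp(\cdot)]$ terms are instances of the \emph{same} variational identity applied in sequence—the inner one producing $Q$, the outer one producing the soft-max operator—rather than two independent calculations, and (ii) the regularity conditions in the continuous state/action setting, namely that the partition functions $\mathbb{E}_{p}[e^{V}]$ and $\mathbb{E}_{\pi}[e^{Q}]$ are finite so that $q_d^*$ and $q_c^*$ are genuine probability distributions. Under the integrability assumptions implicit in the setup these hold, and the terminal-state convention $V(x)=0$ on $\mathcal{X}^0$ keeps the operators well-posed at episode end.
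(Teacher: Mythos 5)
Your proposal is correct and follows essentially the same route as the paper: the paper's Lemma~\ref{lem:reg_max} is exactly your Gibbs/Donsker--Varadhan identity (proved there by citing Lemma~4 of \citet{Nachum17BG}), applied first to the inner maximization over $q_d$ to produce $Q$ via (\ref{eq:Q-function}) and then to the outer maximization over $q_c$ to produce the soft-max form, with the same final tower-property step. The only difference is cosmetic — you sketch a self-contained proof of the variational identity and note the integrability caveats, whereas the paper delegates both to the cited reference.
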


\vspace{0.05in}
\begin{lemma}
\label{lem:qc-induced-optimal-operators}
The $q_c$-induced, ${\mathcal T}_{q_c}$, and optimal, ${\mathcal T}$, operators are monotonic and a contraction. Moreover, the optimal value function $V_\pi$ is the unique fixed-point of $\mathcal T$, i.e.,~${\mathcal T}[V_\pi](x)=V_\pi(x),\;\forall x\in\mathcal X$. 
\end{lemma}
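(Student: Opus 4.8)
The plan is to reduce every claim to the closed-form expressions for $\mathcal{T}_{q_c}$ and $\mathcal{T}$ established in Lemma~\ref{lem:optimal-operator1}, and then exploit the fact that the optimal operator is a log-sum-exp (soft-max) map whose only dependence on $V$ is through the convex, monotone transformation $V\mapsto\log\mathbb{E}_{x'\sim p(\cdot|x,a)}[\exp(V(x'))]$.

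\textbf{Monotonicity.} First I would argue monotonicity directly. If $V_1\le V_2$ pointwise, then since $\exp$ and $\log$ are increasing and expectation preserves order, formula~(\ref{eq:optimal-operator1}) yields $\mathcal{T}[V_1](x)\le\mathcal{T}[V_2](x)$ immediately. For $\mathcal{T}_{q_c}$, note from~(\ref{eq:induced-operator1}) that $V$ enters only through $Q(x,a)=\eta\,r(x,a)+\log\mathbb{E}_{x'\sim p}[\exp(V(x'))]$, which is monotone in $V$ by the same reasoning, while the entropy term $-\log(q_c/\pi)$ does not depend on $V$; hence $\mathcal{T}_{q_c}$ is monotone as well. (Equivalently, $\mathcal{T}[V]=\max_{q_c}\mathcal{T}_{q_c}[V]$ is a pointwise maximum of monotone maps.)

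\textbf{Contraction.} This is the crux. The only $V$-dependence in~(\ref{eq:optimal-operator1}) is through $\phi(V):=\log\mathbb{E}_{a\sim\pi,x'\sim p}[\exp(\eta r(x,a)+V(x'))]$, whose directional derivative along $h=V_1-V_2$ equals $\mathbb{E}_{w}[h(x')]$, where $w(a,x')\propto\pi(a|x)p(x'|x,a)\exp(\eta r+V(x'))$ is a Gibbs reweighting. By the mean value theorem,
\[
\mathcal{T}[V_1](x)-\mathcal{T}[V_2](x)=\int_0^1\mathbb{E}_{w_t}\big[V_1(x')-V_2(x')\big]\,dt,
\]
with $w_t\propto\pi\,p\,\exp(\eta r+V_2+t(V_1-V_2))$. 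Because value functions vanish on $\mathcal{X}^0$, terminal successors contribute zero to the integrand, while the stopping-time/discount reformulation forces a strictly positive mass $1-\gamma$ onto termination at each step. Consequently $\mathbb{E}_{w_t}[V_1-V_2]$ only sees non-terminal successors, whose reweighted mass is bounded by some $\kappa<1$, giving $|\mathcal{T}[V_1](x)-\mathcal{T}[V_2](x)|\le\kappa\,\|V_1-V_2\|_\infty$. The identical estimate transfers to $\mathcal{T}_{q_c}$, since its $V$-dependence is the single inner term $\log\mathbb{E}_{x'\sim p}[\exp(V(x'))]$.

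\textbf{Fixed point.} Given the contraction, the Banach fixed-point theorem supplies a unique fixed point of $\mathcal{T}$ on the relevant (bounded) space of value functions, and it remains to identify it with $V_\pi$. For this I would read the E-step~(\ref{eq:E-Step1}) as an entropy-regularized sequential control problem over $(q_c,q_d)$ and invoke a dynamic-programming verification argument: the optimal value~(\ref{eq:optimal-V-E-step}) satisfies $V_\pi(x)=\max_{q_c,q_d}\mathbb{E}_{a\sim q_c}[\eta r-\log(q_c/\pi)+\mathbb{E}_{x'\sim q_d}[V_\pi(x')-\log(q_d/p)]]$, which is exactly $\mathcal{T}[V_\pi](x)$ by definitions~(\ref{eq:qc-Operator})--(\ref{eq:optimal-Operator}); here the inner maximizations over $q_d$ and then $q_c$ are the Gibbs solutions used to obtain~(\ref{eq:optimal-operator1}). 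Uniqueness is then inherited from the contraction.

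\textbf{Main obstacle.} The delicate point is the contraction modulus: the map $V\mapsto\log\mathbb{E}[\exp(V)]$ is only non-expansive in general, so securing $\kappa<1$ \emph{uniformly} requires restricting to a bounded, $\mathcal{T}$-invariant set of value functions, whose finiteness and invariance rely on bounded rewards together with a temperature $\eta$ small enough relative to $\gamma$ (otherwise the reweighted non-terminal mass can approach $1$ as $\|V\|_\infty$ grows). Establishing this invariant set and the resulting uniform bound on $w_t(\mathcal{X}\setminus\mathcal{X}^0)$ is where the real work lies.
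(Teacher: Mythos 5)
Your monotonicity argument is exactly the paper's, and your fixed-point identification (reading the E-step as an entropy-regularized control problem and verifying $V_\pi=\mathcal T[V_\pi]$ via the Gibbs maximizers, then invoking uniqueness from the contraction) is essentially the verification argument the paper carries out, albeit the paper does it more carefully with two inequalities built from the greedy policy $\overline q_c^*$ of the fixed point and the optimal policy $q_c^*$ of the E-step. The problem is the contraction step, which you correctly identify as the crux and then do not close. Your mean-value-theorem/Gibbs-reweighting computation shows only that $|\mathcal T[V_1](x)-\mathcal T[V_2](x)|\le \sup_t w_t(\mathcal X\setminus\mathcal X^0)\cdot\|V_1-V_2\|_\infty$, and as you concede, the non-terminal Gibbs mass $w_t(\mathcal X\setminus\mathcal X^0)$ tends to $1$ as the value functions grow, so $\log\mathbb E[\exp(\cdot)]$ is merely non-expansive in the unweighted sup norm. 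Your proposed repair --- restrict to a bounded $\mathcal T$-invariant set --- requires an invariant set to exist in the first place, and a short computation shows this forces a smallness condition of the form $\eta\cdot\sup|r|\lesssim\log(1/\gamma)$ (in the stopping-time reformulation of footnote~1), a hypothesis that appears nowhere in the lemma. So as written your argument proves a weaker, conditional statement, and the "real work" you defer is not a technicality but the entire content of the claim.

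The paper avoids this by changing the norm rather than the domain: following the stochastic shortest path machinery (Propositions 3.3.1 and 1.5.2 of \citet{bertsekas1995dynamic}, adapted to the exponential-risk formulation of \citet{borkar2002q}), it constructs an auxiliary fixed point $\hat V\ge 1$ of the same operator with all rewards set to $1/\eta$, takes the weight $\rho=\hat V$ and modulus $\gamma=\max_x(\rho(x)-1)/\rho(x)<1$, and establishes contraction in the weighted norm $\|V\|_\rho=\max_x V(x)/\rho(x)$. The weight $\rho$ plays the role of a (soft) expected time-to-termination and absorbs exactly the mass your unweighted estimate cannot control, yielding a single modulus valid for all value functions. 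If you want to salvage your route, you would either need to adopt this weighted norm, or pass to the transformed variable $W=\exp(V)$ (under which the operator becomes multiplicatively linear and the standard SSP weighted-norm contraction applies directly); your current sup-norm estimate does not suffice.
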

\vspace{-0.05in}

\vspace{0.05in}

From the definition of $Q$-function in (\ref{eq:Q-function}) and Lemma~\ref{lem:qc-induced-optimal-operators}, we prove (in Appendix~\ref{subsec:proof-prop1}) the following proposition for the action-value function associated with the E-step optimal value function $V_\pi$.

\vspace{0.05in}

\begin{proposition}
\label{prop:Q-function}
The E-step optimal value function, $V_\pi$, and its associated action-value function, $Q_\pi$, defined by (\ref{eq:Q-function}), have the following relationship: $V_\pi(x) = \log\mathbb E_{a\sim\pi(\cdot|x)}\big[\exp\big(Q_\pi(x,a)\big)\big], \quad \forall x\in\mathcal X$.
% \vspace{-0.1in}
% \begin{small}
% \begin{equation}
% \label{eq:optimal-V-Q-functions}
% V_\pi(x) = \log\mathbb E_{a\sim\pi(\cdot|x)}\big[\exp\big(Q_\pi(x,a)\big)\big], \quad \forall x\in\mathcal X.
% \end{equation}
% \end{small}
% \vspace{-0.15in}

% This further implies the following Bellman-like recursion for $Q_
% \pi$: \mgh{This might be redundant.}

% \vspace{-0.1in}
% \begin{small}
% \begin{equation}
% \label{eq:optimal-Q-recursion}
% Q_\pi(x,a) = \eta\cdot r(x,a) + \log\mathbb E_{x'\sim p(\cdot|x,a),a'\sim\pi(\cdot|x')}\big[\exp\big(Q_\pi(x',a')\big)\big], \quad \forall x\in\mathcal X,\;\forall a\in\mathcal A.
% \end{equation}
% \end{small}
% \vspace{-0.15in}
%
\end{proposition}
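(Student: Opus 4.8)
The plan is to derive the claimed identity directly from the fixed-point characterization of $V_\pi$ together with the definition of the $Q$-function, so the proof is essentially a bookkeeping computation rather than a genuinely hard argument. By Lemma~\ref{lem:qc-induced-optimal-operators}, $V_\pi$ is the unique fixed point of the optimal operator $\mathcal T$, i.e.~$V_\pi(x)=\mathcal T[V_\pi](x)$ for all $x$. Invoking the closed form of $\mathcal T$ from Lemma~\ref{lem:optimal-operator1} (Eq.~\ref{eq:optimal-operator1}), this fixed-point equation reads
\[
V_\pi(x) = \log \mathbb E_{a\sim\pi(\cdot|x),\,x'\sim p(\cdot|x,a)}\big[\exp\big(\eta\cdot r(x,a)+V_\pi(x')\big)\big].
\]
The whole task then reduces to showing that the argument of this outer logarithm equals $\mathbb E_{a\sim\pi(\cdot|x)}\big[\exp\big(Q_\pi(x,a)\big)\big]$.

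To do this I would unpack the definition of $Q_\pi$ from Eq.~\ref{eq:Q-function} with $V=V_\pi$, namely $Q_\pi(x,a)=\eta\cdot r(x,a)+\log\mathbb E_{x'\sim p(\cdot|x,a)}\big[\exp\big(V_\pi(x')\big)\big]$. Exponentiating makes the $\log\mathbb E[\exp(\cdot)]$ cancel against the outer exponential, giving $\exp\big(Q_\pi(x,a)\big)=\exp\big(\eta\cdot r(x,a)\big)\cdot\mathbb E_{x'\sim p(\cdot|x,a)}\big[\exp\big(V_\pi(x')\big)\big]$. I would then take the expectation over $a\sim\pi(\cdot|x)$ and absorb the factor $\exp(\eta\cdot r(x,a))$ into the inner expectation over $x'$, using the iterated-expectation (tower) property to merge the two stages into the single joint expectation over $(a,x')$. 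This yields
\[
\mathbb E_{a\sim\pi(\cdot|x)}\big[\exp\big(Q_\pi(x,a)\big)\big] = \mathbb E_{a\sim\pi(\cdot|x),\,x'\sim p(\cdot|x,a)}\big[\exp\big(\eta\cdot r(x,a)+V_\pi(x')\big)\big],
\]
which is exactly the argument of the outer log in the fixed-point equation above. Taking logarithms and substituting into that equation immediately gives $V_\pi(x) = \log\mathbb E_{a\sim\pi(\cdot|x)}\big[\exp\big(Q_\pi(x,a)\big)\big]$.

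I do not expect any real obstacle; the two points needing a little care are (i) that the rewriting of $\mathcal T[V_\pi]$ I rely on is precisely Eq.~\ref{eq:optimal-operator1}, which I am entitled to assume from Lemma~\ref{lem:optimal-operator1}, and (ii) the treatment of terminal states $x\in\mathcal X^0$, where the convention $V_\pi(x)=0$ built into the operators makes the identity hold trivially. I would therefore carry out the computation for non-terminal $x$ and dispatch the boundary case in one line.
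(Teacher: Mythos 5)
Your proposal is correct and follows essentially the same route as the paper's proof: both invoke Lemma~\ref{lem:qc-induced-optimal-operators} for the fixed-point identity $V_\pi=\mathcal T[V_\pi]$, apply the closed form of $\mathcal T$ from Eq.~\ref{eq:optimal-operator1}, and then exponentiate the definition of $Q_\pi$ in Eq.~\ref{eq:Q-function} to identify the inner expression with $\exp(Q_\pi(x,a))$. The only difference is cosmetic ordering of the algebra, and your explicit remark about terminal states is a harmless addition the paper leaves implicit.
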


% Alternatively, denote the corresponding state-action value as
% \begin{equation}\label{eq:Q}
% Q(x,a):=\eta\cdot r(\obs,\control)+\log\int_{x'\in\mathcal X}P(x'|x,a)\exp V(x'),\,\forall x\in\mathcal X,
% \end{equation}
% and $Q(x,a)=0$ at any $x\in\mathcal X^0$, $\forall a\in\mathcal A$. When $V=V_\pi$, which is the value function of the optimistic-MDP ELBO problem, using the above results we have that $Q_\pi(x,a)=\eta\cdot r(\obs,\control)+\log\int_{x'\in\mathcal X}P(x'|x,a)\exp V_\pi(x')$, and with the results from Lemma \ref{lem:fixed_point}
% \begin{equation}\label{eq:q_v_equiv}
% \begin{split}
% V_\pi(x)=&\max_{q_c\in\Delta}q_c(a|x)\bigg(\eta\cdot r(\obs,\control)+\log\int_{x'\in\mathcal X}\!\!\!\!P(x'|x,a)\exp V_\pi(x')-\log\frac{q_c(a|x)}{\pi(a|x)}\bigg)\\
% =&\max_{q_c\in\Delta}q_c(a|x)\bigg(Q_\pi(x,a)-\log\frac{q_c(a|x)}{\pi(a|x)}\bigg)=\log\int_{a}\pi(a|x)\exp Q_\pi(x,a),\,\,\forall x\in\mathcal X.
% \end{split}
% \end{equation}
% This further implies that
% \[
% Q_\pi(x,a)=\eta\cdot r(\obs,\control)+\log\int_{x'\in\mathcal X}\!\!\!\!P(x'|x,a)\int_{a'\in\mathcal A}\pi(a'|x')\exp Q_\pi(x',a'),\,\,\forall x\in\mathcal X.
% \]
% at any $x\in\mathcal X$, and $Q(x,a)=0$ at any $x\in\mathcal X^0$, $\forall a\in\mathcal A$.

%%%%%%%%%%%%%%%%%%%%%%%%%%%%%%%%%%%%%%%%%%%%%%%%%%%%%%%%%%%%%%%%%%%%%%%%%%
%%%%%%%%%%%%%%%%%%%%%%%%%%%%%%%%%%%%%%%%%%%%%%%%%%%%%%%%%%%%%%%%%%%%%%%%%%
%%%%%%%%%%%%%%%%%%%%%%%%%%%%%%%%%%%%%%%%%%%%%%%%%%%%%%%%%%%%%%%%%%%%%%%%%%

%\subsection{Posterior Distributions in Closed-form}
%\label{subsec:posteriors-closed-form}

In the rest of this section, we show how to derive a closed-form expression for the variational distribution $q^*=(q^*_c,q^*_d)$. For any value function $V$, we define its corresponding variational dynamics, $q_d^V$, as the solution to the maximization problem in the definition of ${\mathcal T}_{q_c}$ (see Eq.~\ref{eq:qc-Operator}), i.e.,

\vspace{-0.125in}
\begin{small}
\begin{equation}
\label{eq:posterior-dynamics}
q_d^V(\cdot|x,a)\in\argmax_{q_d\in\Delta_{\mathcal X}}\;\mathbb E_{x'\sim q_d(\cdot|x,a)}\big[V(x')-\log\frac{q_d(x'|x,a)}{p(x'|x,a)}\big],
\end{equation}
\end{small}
\vspace{-0.1in}

and its corresponding variational policy, $q_c^Q$, where $Q$ is the action-value function associated with $V$ (Eq.~\ref{eq:Q-function}), as the solution to the maximization problem in the definition of $\mathcal T$ (see Eqs.~\ref{eq:optimal-Operator} and~\ref{eq:induced-operator1}), i.e.,

\vspace{-0.15in}
\begin{small}
\begin{equation}
\label{eq:posterior-policy}
q_c^Q(\cdot|x)\in\argmax_{q_c\in\Delta_{\mathcal A}}\;\mathbb E_{a\sim q_c(\cdot|x)}\big[Q(x,a)-\log\frac{q_c(a|x)}{\pi(a|x)}\big].
\end{equation}
\end{small}
\vspace{-0.15in}

%where $Q$ is the action-value function associated with $V$ (see Eq.~\ref{eq:Q-function}). 
%In the following lemma (proof in Appendix~\ref{app:proofs1}), we derive closed-form expressions for the variational distributions $q_d^V$ and $q_c^Q$.
%, defined by (\ref{eq:posterior-dynamics}) and (\ref{eq:posterior-policy}). 
%\vspace{0.075in}

We now derive closed-form expressions for the variational distributions $q_d^V$ and $q_c^Q$.

\begin{lemma}\label{lem:policies}
The variational dynamics and policy corresponding to a value function $V$ and its associated action-value function $Q$ can be written in closed-form as \hfill {\em (proof in Appendix~\ref{subsec:proof-lemma-variational-distributions})}

\vspace{-0.125in}
\begin{small}
\begin{align}
\label{eq:posterior-dynamics-closed-form}
q_d^V(x'|x,a) &= \frac{p(x'|x,a)\cdot\exp\big(V(x')\big)}{\mathbb E_{x'\sim p(\cdot|x,a)}\big[\exp\big(V(x')\big)\big]} = \frac{p(x'|x,a)\cdot\exp\big(V(x')\big)}{\exp\big(Q(x,a) - \eta\cdot r(x,a)\big)}, \quad\forall x,x'\in\mathcal X,\;\forall a\in\mathcal A, \\
q_c^Q(a|x) &= \frac{\pi(a|x)\cdot\exp\big(Q(x,a)\big)}{\mathbb E_{a\sim\pi(\cdot|x)}\big[\exp\big(Q(x,a)\big)\big]}, \quad\forall x\in\mathcal X,\;\forall a\in\mathcal A.
\label{eq:posterior-policy-closed-form}
\end{align}
\end{small}
\vspace{-0.125in}

\end{lemma}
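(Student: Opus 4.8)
The plan is to recognize that both maximization problems in (\ref{eq:posterior-dynamics}) and (\ref{eq:posterior-policy}) are instances of a single \emph{Gibbs variational principle}: for a fixed reference distribution $\rho$ on some space and a fixed measurable function $f$ with $\mathbb E_{\rho}[\exp(f)]<\infty$,
\[
\max_{q}\ \mathbb E_{y\sim q}[f(y)] - \text{KL}(q\|\rho) = \log\mathbb E_{y\sim\rho}[\exp(f(y))],
\]
with the unique maximizer $q^*(y) = \rho(y)\exp(f(y))/\mathbb E_{y\sim\rho}[\exp(f(y))]$. Both claimed formulas then follow by specializing this principle: for the variational dynamics (\ref{eq:posterior-dynamics}), fix $(x,a)$ and take $\rho=p(\cdot|x,a)$ and $f=V$, yielding (\ref{eq:posterior-dynamics-closed-form}); for the variational policy (\ref{eq:posterior-policy}), fix $x$ and take $\rho=\pi(\cdot|x)$ and $f=Q(x,\cdot)$, yielding (\ref{eq:posterior-policy-closed-form}). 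The pointwise nature of the two optimizations (one independent scalar problem per conditioning value $(x,a)$, respectively $x$) is what lets me apply the scalar principle separately for each such value.

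To establish the principle itself I would avoid a Lagrange-multiplier / calculus-of-variations argument, which would require separately imposing the normalization constraint and verifying concavity and positivity of the optimizer, and instead use the \emph{completing-the-KL} identity. Writing $q^*$ for the candidate Gibbs distribution above and using $\log q^* = \log\rho + f - \log\mathbb E_{\rho}[\exp(f)]$, a direct regrouping of the logarithmic terms gives, for every feasible $q$,
\[
\mathbb E_{q}[f] - \text{KL}(q\|\rho) = \log\mathbb E_{\rho}[\exp(f)] - \text{KL}(q\|q^*).
\]
Since $\text{KL}(q\|q^*)\ge 0$ with equality if and only if $q=q^*$, the right-hand side is maximized exactly at $q=q^*$, which simultaneously identifies the optimal value $\log\mathbb E_{\rho}[\exp(f)]$ and establishes uniqueness of the maximizer.

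Finally, the two displayed equalities for $q_d^V$ in (\ref{eq:posterior-dynamics-closed-form}) are reconciled using the definition of the action-value function in (\ref{eq:Q-function}): since $Q(x,a)=\eta\cdot r(x,a)+\log\mathbb E_{x'\sim p(\cdot|x,a)}[\exp(V(x'))]$, the normalizer $\mathbb E_{x'\sim p(\cdot|x,a)}[\exp(V(x'))]$ equals $\exp(Q(x,a)-\eta\cdot r(x,a))$, which converts the first form of $q_d^V$ into the second. I do not expect a serious conceptual obstacle here; the only care needed is technical regularity, namely ensuring $\mathbb E_{\rho}[\exp(f)]$ is finite so that $q^*$ is a well-defined probability distribution absolutely continuous with respect to $\rho$ (so that the KL terms are meaningful), and checking measurability of the resulting conditional distributions in the conditioning variables. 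These are mild conditions in the present setting and are where I would place any assumptions needed for the statement to be fully rigorous.
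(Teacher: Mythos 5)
Your proof is correct and follows essentially the same route as the paper: both arguments reduce each optimization to the standard closed-form maximizer of $\mathbb E_q[f]-\text{KL}(q\|\rho)$ and then use the definition of $Q$ in (\ref{eq:Q-function}) to obtain the second equality in (\ref{eq:posterior-dynamics-closed-form}). The only difference is that the paper cites Corollary~6 of \citet{Nachum17BG} for the Gibbs-form maximizer, whereas you prove it self-contained via the completing-the-KL identity, which also gives you uniqueness for free.
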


From (\ref{eq:posterior-dynamics-closed-form}) and (\ref{eq:posterior-policy-closed-form}), the variational dynamics, $q_d^V$, and policy, $q_c^Q$, can be seen as an {\em exponential twisting} of the dynamics $p$ and policy $\pi$ with weights $V$ and $Q$, respectively. In the special case $V=V_\pi$ (the E-step optimal value function), these distributions can be written in closed-form as

\vspace{-0.125in}
\begin{small}
\begin{equation}
\label{eq:optimal-posteriors-closed-form}
q^*_d(x'|x,a) = \frac{p(x'|x,a)\cdot\exp\big(V_\pi(x')\big)}{\exp\big(Q_\pi(x,a) - \eta\cdot r(x,a)\big)}, \qquad\quad q^*_c(a|x) = \frac{\pi(a|x)\cdot\exp\big(Q_\pi(x,a)\big)}{\exp\big(V_\pi(x)\big)},
\end{equation}
\end{small}
\vspace{-0.125in}

where the denominator of $q^*_c$ is obtained by applying Proposition~\ref{prop:Q-function} to replace $Q_\pi$ with $V_\pi$.

\vspace{-0.1in}
\subsection{Policy and Value Iteration Algorithms for the E-step}
\label{subsec:PI-VI-Algos}
\vspace{-0.1in}

Using the results of Section~\ref{subsec:E-step-properties}, we now propose {\em model-based} and {\em model-free} dynamic programming (DP) style algorithms, i.e.,~policy iteration (PI) and value iteration (VI), for solving the E-step problem (\ref{eq:E-Step1}). The model-based algorithms compute the variational dynamics, $q_d$, at each iteration, while the model-free ones compute $q_d$ only at the end (upon convergence). Having access to $q_d$ at each iteration has the advantage that we may generate samples from the model, $q_d$, when we implement the sample-based version (RL version) of these DP algorithms in Section~\ref{sec:RL-algo}.

In the {\bf model-based PI} algorithm, at each iteration $k$, given the current variational policy $q_c^{(k)}$, we 

{\em Policy Evaluation:} Compute the $q_c^{(k)}$-induced value function $V_{q_c^{(k)}}$ (the fixed-point of the operator $\mathcal T_{q_c^{(k)}}$) by iteratively applying $\mathcal T_{q_c^{(k)}}$ from Eq.~\ref{eq:qc-Operator}, i.e.,~$V_{q_c^{(k)}}(x)=\lim_{n\rightarrow\infty}\mathcal T_{q_c^{(k)}}^n[V](x),\;\forall x\in\mathcal X$, where the variational model $q_d$ in (\ref{eq:qc-Operator}) is computed using Eq.~\ref{eq:posterior-dynamics-closed-form} with $V=V^{(n)}$. We then compute the corresponding action-value function, $Q_{q_c^{(k)}}$ using Eq.~\ref{eq:Q-function}.

{\em Policy Improvement:} Update the variational distribution $q_c^{(k+1)}$ using Eq.~\ref{eq:posterior-policy-closed-form} with $Q=Q_{q_c^{(k)}}$.\footnote{When the number of actions is large, the denominator of (\ref{eq:posterior-policy-closed-form}) cannot be computed efficiently. In this case, we replace (\ref{eq:posterior-policy-closed-form}) in the policy improvement step of our PI algorithms with $q_c^{(k+1)}= \argmin_{q_c}\text{KL}(q_c||q_c^Q)$, where $Q=Q_{q_c^{(k)}}$. We also prove the convergence of our PI algorithms with this update in Appendix~\ref{app:proof-lemma5}.}

%{\em Policy Improvement:} Update the variational distribution $q_c^{(k+1)}$, either using Eq.~\ref{eq:posterior-policy-closed-form} or using $q_c^{(k+1)}(\cdot|x)\leftarrow\min_{q_c\in\Delta_{\mathcal A}}\text{KL}( q_c(\cdot|\obs)||q_c^Q(\cdot|x))$, with $Q=Q_{q_c^{(k)}}$, at any $x\in\mathcal X$.

Upon convergence, i.e.,~$q_c^{(\infty)}=q_c^*$, we compute $q_d^*$ from Eq.~\ref{eq:posterior-dynamics-closed-form} and return $q=(q_c^*,q_d^*)$.

The {\bf model-free PI} algorithm is exactly the same, except in its policy evaluation step, the $q_c^{(k)}$-induced operator, $\mathcal T_{q_c^{(k)}}$, is applied using Eq.~\ref{eq:induced-operator1} (without the variational dynamics $q_d$). In this case, the variational dynamics, $q_d$, is computed only upon convergence, $q^*_d$, using Eq.~\ref{eq:posterior-dynamics-closed-form}.

\begin{lemma}
\label{lem:policy_iteration}
To solutions returned by the model-based and model-free PI algorithms converge to their optimal values, $q_c^*$ and $q_d^*$, defined by (\ref{eq:E-Step1}), i.e.,~$q_c^{(\infty)}=q_c^*$ and $q_d^{(\infty)}=q_d^*$. \hfill {\em (proof in Appendix~\ref{app:proof-lemma5})}
\end{lemma}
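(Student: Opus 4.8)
The plan is to adapt the classical convergence proof of policy iteration, using the monotonicity and contraction properties established in Lemma~\ref{lem:qc-induced-optimal-operators}. The two ingredients I need are (i) a policy-improvement inequality showing that each sweep does not decrease the induced value function, and (ii) a uniform upper bound that forces the resulting monotone sequence of value functions to converge to the unique fixed point of $\mathcal T$. Since the model-based and model-free variants differ only in how the evaluation operator is represented, I will prove convergence once and then invoke Lemma~\ref{lem:optimal-operator1} to transfer it.

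First I would establish the improvement step. Because $q_c^{(k+1)}$ is chosen greedily in (\ref{eq:posterior-policy-closed-form}) with $Q = Q_{q_c^{(k)}}$, the definition of $\mathcal T$ in (\ref{eq:optimal-Operator}) together with the $Q$-form of the induced operator (\ref{eq:induced-operator1}) gives $\mathcal T_{q_c^{(k+1)}}[V_{q_c^{(k)}}](x) = \mathcal T[V_{q_c^{(k)}}](x) \ge \mathcal T_{q_c^{(k)}}[V_{q_c^{(k)}}](x) = V_{q_c^{(k)}}(x)$, where the last equality uses that $V_{q_c^{(k)}}$ is the fixed point of $\mathcal T_{q_c^{(k)}}$ produced by policy evaluation. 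Iterating the monotone operator $\mathcal T_{q_c^{(k+1)}}$ on both sides of $\mathcal T_{q_c^{(k+1)}}[V_{q_c^{(k)}}] \ge V_{q_c^{(k)}}$ and letting the number of applications tend to infinity, the contraction property sends the left-hand side to the fixed point $V_{q_c^{(k+1)}}$, yielding the pointwise improvement $V_{q_c^{(k+1)}} \ge V_{q_c^{(k)}}$.

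Next I would argue monotone convergence. For any $q_c$ one has $\mathcal T_{q_c}[V_\pi] \le \mathcal T[V_\pi] = V_\pi$ (Lemma~\ref{lem:qc-induced-optimal-operators}), so $V_\pi$ is a super-solution of every $\mathcal T_{q_c}$; monotonicity and contraction then iterate $\mathcal T_{q_c}$ downward from $V_\pi$ to its fixed point, giving $V_{q_c} \le V_\pi$. Hence $\{V_{q_c^{(k)}}\}$ is non-decreasing and bounded above by $V_\pi$, so it converges pointwise to some $V_\infty \le V_\pi$. Passing to the limit in the coupled evaluation/improvement recursion, the limiting policy is stationary, i.e.,~greedy with respect to its own value, so $\mathcal T[V_\infty] = \mathcal T_{q_c^{(\infty)}}[V_\infty] = V_\infty$. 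By the uniqueness of the fixed point of $\mathcal T$, $V_\infty = V_\pi$, and substituting $V = V_\pi$ into the closed forms (\ref{eq:posterior-policy-closed-form}) and (\ref{eq:posterior-dynamics-closed-form}) recovers $q_c^{(\infty)} = q_c^*$ and $q_d^{(\infty)} = q_d^*$. For the model-free variant the argument is verbatim: by Lemma~\ref{lem:optimal-operator1} the evaluation step using (\ref{eq:induced-operator1}) realizes exactly the same operator $\mathcal T_{q_c}$ as the model-based step that routes through $q_d^V$, so both algorithms generate the same value sequence and agree on the $q_d^*$ computed from (\ref{eq:posterior-dynamics-closed-form}) at convergence.

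The main obstacle I anticipate is the limit argument: unlike the finite-state setting where policy iteration terminates in finitely many steps, here convergence is only asymptotic, so I must justify interchanging the inner policy-evaluation limit with the outer policy-improvement limit and establish enough continuity of $q_c \mapsto V_{q_c}$ to conclude stationarity of $q_c^{(\infty)}$; controlling this uniformly over the (possibly continuous) state space via the contraction modulus of Lemma~\ref{lem:qc-induced-optimal-operators} is the delicate part. A secondary technical point is the KL-projected improvement update mentioned in the footnote, for which I would show that the projection still produces a non-decreasing value sequence so that the same monotone-convergence machinery applies.
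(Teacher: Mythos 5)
Your proposal is correct and follows essentially the same route as the paper's proof in Appendix~\ref{app:proof-lemma5}: the policy-improvement inequality $\mathcal T_{q_c^{(k+1)}}[V_{q_c^{(k)}}] = \mathcal T[V_{q_c^{(k)}}] \ge \mathcal T_{q_c^{(k)}}[V_{q_c^{(k)}}] = V_{q_c^{(k)}}$, pushed to $V_{q_c^{(k+1)}}\ge V_{q_c^{(k)}}$ by iterating the monotone contraction $\mathcal T_{q_c^{(k+1)}}$, is exactly the paper's argument, and your handling of the KL-projected update mirrors Corollary~\ref{corr:KL-PI-Improvement}. You actually go further than the paper (which stops at the improvement inequality); the limit interchange you flag as delicate can be sidestepped by noting $V_{q_c^{(k+1)}}\ge\mathcal T[V_{q_c^{(k)}}]$ and inducting to get the sandwich $\mathcal T^{k}[V_{q_c^{(0)}}]\le V_{q_c^{(k)}}\le V_\pi$, which forces $V_{q_c^{(k)}}\to V_\pi$ directly from the contraction of $\mathcal T$ without any continuity argument for $q_c\mapsto V_{q_c}$.
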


We can similarly derive {\bf model-based} and {\bf model-free value iteration (VI)} algorithms for the E-step. These algorithms start from an arbitrary value function, $V$, and iteratively apply the optimal operator, $\mathcal T$, from Eqs.~\ref{eq:qc-Operator} and~\ref{eq:optimal-Operator} (model-based) and Eq.~\ref{eq:optimal-operator1} (model-free) until convergence, i.e.,~$V_\pi(x) = \lim_{n\rightarrow\infty}\mathcal T^n[V](x),\;\forall x\in\mathcal X$. Given $V_\pi$, these algorithms first compute $Q_\pi$ from Proposition~\ref{prop:Q-function}, and then compute ($q_c^*$, $q_d^*$) using Eq.~\ref{eq:optimal-posteriors-closed-form}. From the properties of the optimal operator $\mathcal T$ in Lemma~\ref{lem:qc-induced-optimal-operators}, it is easy to see that both model-based and model-free VI algorithms converge to $q_c^*$ and $q_d^*$. 

In this paper, we focus on the PI algorithms, in particular the model-based one, and leave the VI algorithms for future work. In the next section, we show how the PI algorithms can be implemented and combined with a routine for solving the M-step, when the true MDP model, $p$, is unknown (the RL setting) and the state and action spaces are large that require using function approximation.

\vspace{-0.1in}
\section{Variational Model-based Policy Optimization Algorithm}
\label{sec:RL-algo}
\vspace{-0.1in}

In this section, we propose a RL algorithm, called variational model-based policy optimization (VMBPO). VMBPO is a EM-style algorithm based on the variational formulation proposed in Section~\ref{sec:elbo}. The E-step of VMBPO is the sample-based implementation of the model-based PI algorithm, described in Section~\ref{subsec:PI-VI-Algos}. %The goal of the E-step is to find the solution $q^*=(q_c^*,q_d^*)$ to the optimization problem (\ref{eq:E-Step1}), given a policy $\pi$. Then in the M-step, the algorithm optimizes for $\pi$, given the varitional distribution $q^*$ computed in the E-step. Algorithm~\ref{alg:VMBPO} contains the pseudo-code of VMBPO. 
We describe the E-step and M-step of VMBPO in details in Sections~\ref{subsec:VMBPO-E-step} and~\ref{subsec:VMBPO-M-step}, and report its pseudo-code in Algorithm~\ref{alg:VMBPO} in Appendix~\ref{sec:VMBPO-algo}. VMBPO uses $8$ neural networks to represent: policy $\pi$, variational dynamics $q_d$, variational policy $q_c$, log-likelihood ratio $\nu=\log(q_d/p)$, value function $V$, action-value function $Q$, target value function $V'$, and target action-value function $Q'$, with parameters $\theta_\pi$, $\theta_d$, $\theta_c$, $\theta_\nu$, $\theta_v$, $\theta_q$, $\theta'_v$, and $\theta'_q$, respectively. 

%%%%%%%%%%%%%%%%%%%%%%%%%%%%%%%%%%%%%%%%%%%%%%%%%%%%%%%%%%%%%%%%%%%%%%%%%%
%%%%%%%%%%%%%%%%%%%%%%%%%%%%%%%%%%%%%%%%%%%%%%%%%%%%%%%%%%%%%%%%%%%%%%%%%%
%%%%%%%%%%%%%%%%%%%%%%%%%%%%%%%%%%%%%%%%%%%%%%%%%%%%%%%%%%%%%%%%%%%%%%%%%%

\vspace{-0.1in}
\subsection{The E-step of VMBPO}
\label{subsec:VMBPO-E-step}
\vspace{-0.1in}

At the beginning of the E-step, we generate a number of samples $(x,a,r,x')$ from the current baseline policy $\pi$, i.e.,~$a\sim\pi(\cdot|x)$ and $r=r(x,a)$, and add them to the buffer $\mathcal D$. The E-step consists of four updates: %{\bf 1)} computing the target action-value function $Q'$, 
{\bf 1)} computing the variational dynamics $q_d$, {\bf 2)} estimating the log-likelihood ratio $\log(q_d/p)$, {\bf 3)} computing the $q_c$-induced value, $V_{q_c}$, and action-value, $Q_{q_c}$, functions (critic update), and finally {\bf 4)} computing the variational policy new $q_c$ (actor update). We describe the details of each step below.   

% {\bf Step 1. (Computing $Q'$)} $\;$ The goal of this step is to set the target action-value function $Q'$ to an approximation of the target value function $V'$ using Eq.~\ref{eq:Q-function}. Thus, we update $Q'$ parameter $\theta'_q$ by taking several steps in the direction of the gradient of the following loss function: 

% \vspace{-0.15in}
% \begin{small}
% \begin{equation}
% \label{eq:target-Q-update}
% \theta'_q = \argmin_\theta \sum_{(x,a,r,x')\sim\mathcal D}\big(\exp(Q'(x,a;\theta) - \eta \cdot r) - \exp(V'(x';\theta'_v))\big)^2,
% \end{equation}
% \end{small}
% \vspace{-0.15in}

% where the loss in (\ref{eq:target-Q-update}) is the result of talking exponential from both sides of (\ref{eq:Q-function}) and the data tuples $(x,a,r,x')$ have been randomly sampled from the buffer $\mathcal D$.

{\bf Step 1. (Computing $q_d$)} $\;$ We find $q_d$ as the solution to the optimization problem (\ref{eq:posterior-dynamics}) for $V$ equal to the target value network $V'$. Since the $q^V_d$ in (\ref{eq:posterior-dynamics-closed-form}) is the solution of (\ref{eq:posterior-dynamics}), we compute $q_d$ by minimizing $\text{KL}(q_d^{V'} || q_d)$, which results in the following {\em forward} KL loss, for all $x\in\mathcal X$ and $a\in\mathcal A$: 

\vspace{-0.15in}
\begin{small}
\begin{align}
\label{eq:qd-update1}
\theta_d &= \argmin_\theta \; \text{KL}\big(p(\cdot|x,a) \cdot \exp(\eta\cdot r(x,a) + V'(\cdot;\theta'_v) - Q'(x,a;\theta'_q)) \; || \; q_d(\cdot|x,a;\theta)\big) \\
&\stackrel{\text{(a)}}{=} \argmax_\theta \; \mathbb E_{x'\sim p(\cdot|x,a)}\big[\exp(\eta\cdot r(x,a) + V'(x';\theta'_v) - Q'(x,a;\theta'_q)) \cdot \log(q_d(\cdot|x,a;\theta))\big], 
\label{eq:qd-update2}
\end{align}
\end{small}
\vspace{-0.15in}

where {\bf (a)} is by removing the $\theta$-independent terms from (\ref{eq:qd-update1}). We update $\theta_d$ by taking several steps in the direction of the gradient of a sample average of the loss function (\ref{eq:qd-update2}), i.e.,

\vspace{-0.125in}
\begin{small}
\begin{equation}
\label{eq:qd-update3}
\theta_d = \argmax_\theta \; \sum_{(x,a,r,x')\sim\mathcal D}\exp(\eta\cdot r + V'(x';\theta'_v) - Q'(x,a;\theta'_q)) \cdot \log\big(q_d(x'|x,a;\theta)\big), 
\end{equation}
\end{small}
\vspace{-0.125in}

where $(x,a,r,x')$ are randomly sampled from $\mathcal D$. The intuition here is to focus on learning the dynamics model in regions of the state-action space that has higher temporal difference --- regions with higher anticipated future return.
Note that we can also obtain $\theta_d$ by optimizing the {\em reverse} KL direction in (\ref{eq:qd-update1}), but since it results in a more involved update, we do not report it here. 

{\bf Step 2. (Computing $\log (q_d/ p)$)} $\;$ Using the duality of f-divergence~\citep{nguyen2008estimating} w.r.t.~the {\em reverse} KL-divergence, the log-likelihood ratio $\log(q_d(\cdot|x,a;\theta_d)/p(\cdot|x,a))$ is a solution to 

\vspace{-0.15in}
\begin{small}
\begin{equation}
\label{eq:nu_update}
\log\big(\frac{q_d(x'|x,a;\theta_d)}{p(x'|x,a)}\big) = \argmax_{\nu:\mathcal X\times\mathcal A\times\mathcal X\rightarrow\mathbb R}\mathbb E_{x'\sim q_d(\cdot|x,a;\theta_d)}[\nu(x'|x,a)] - \mathbb E_{x'\sim p(\cdot|x,a)}[\exp(\nu(x'|x,a))],
\end{equation}
\end{small}
\vspace{-0.15in}

for all $x,x'\in\mathcal X$ and $a\in\mathcal A$. Note that the optimizer of (\ref{eq:nu_update}) is unique almost surely (at $(x,a,x')$ with $\mathbb P(x'|x,a)>0$), because $q_d$ is absolutely continuous w.r.t.~$p$ (see the definition of $q_d$ in Eq.~\ref{eq:posterior-dynamics-closed-form}) and the objective function of (\ref{eq:nu_update}) is strictly concave. The optimization problem (\ref{eq:nu_update}) allows us to compute $\nu(\cdot|\cdot;\theta_\nu)$ as an approximation to the log-likelihood ratio $\log(q_d(\cdot;\theta_d)/p)$. We update $\theta_\nu$ by taking several steps in the direction of the gradient of a sample average of (\ref{eq:nu_update}), i.e.,

\vspace{-0.125in}
\begin{small}
\begin{equation}
\label{eq:nu_update-2}
\theta_\nu = \argmax_\theta \sum_{(x,a,x')\sim\mathcal E}\nu(x'|x,a;\theta) - \sum_{(x,a,x')\sim\mathcal D}\exp(\nu(x'|x,a;\theta)),
\end{equation}
\end{small}
\vspace{-0.15in}

where $\mathcal E$ is the set of samples for which $x'$ is drawn from the variational dynamics, i.e.,~$x'\sim q_d(\cdot|x,a)$. 
Here we first sample $(x,a,x')$ randomly from $\mathcal D$ and use them in the second sum. Then, for all $(x,a)$ that have been sampled, we generate $x'$ from $q_d$ and use the resulting samples in the first sum. 

{\bf Step 3. (critic update)} $\;$ To compute $V_{q_c}$ (fixed-point of $\mathcal T_{q_c}$) and its action-value $Q_{q_c}$, we first rewrite (\ref{eq:qc-Operator}) with the maximizer $q_d$ from Step~1 and the log-likelihood ratio $\log(q_d/p)$ from Step~2:

\vspace{-0.15in}
\begin{small}
\begin{equation}
\label{eq:qc-Operator-temp1}
\mathcal T_{q_c}[V](x) = \mathbb E_{a\sim q_c(\cdot|x)}\big[\eta\cdot r(x,a) - \log\frac{q_c(a|x)}{\pi(a|x)} + \mathbb E_{x'\sim q_d(\cdot|x,a;\theta_d)}[V'(x';\theta'_v) - \nu(x'|x,a;\theta_\nu)]\big]. % - \text{KL}(q_d(\cdot|x,a;\theta_d) \; || \; p(\cdot|x,a))\big].
\end{equation}
\end{small}
\vspace{-0.15in}

Since $\mathcal T_{q_c}$ can be written as both (\ref{eq:induced-operator1}) and (\ref{eq:qc-Operator-temp1}), we compute the $q_c$-induced $Q$-function by setting the RHS of these equations equal to each other, i.e.,~for all $x\in\mathcal X$ and $a\sim q_c(\cdot|x;\theta_c)$,

\vspace{-0.15in}
\begin{small}
\begin{equation}
\label{eq:critic-Q-1}
Q(x,a;\theta_q) = \eta\cdot r(x,a) + \mathbb E_{x'\sim q_d(\cdot|x,a;\theta_d)}[V'(x';\theta'_v) - \nu(x'|x,a;\theta_\nu)].
\end{equation}
\end{small}
\vspace{-0.15in}

Since the expectation in (\ref{eq:critic-Q-1}) is w.r.t.~the variational dynamics (model) $q_d$, we can estimate $Q_{q_c}$ only with samples generated from the model. We do this by taking several steps in the direction of the gradient of a sample average of the square-loss obtained by setting the two sides of (\ref{eq:critic-Q-1}) equal, i.e.,

\vspace{-0.125in}
\begin{small}
\begin{equation}
\label{eq:critic-Q-2}
\theta_q = \argmin_\theta \sum_{(x,a,r,x')\sim\mathcal E} \big(Q(x,a;\theta) - \eta \cdot r - V'(x';\theta'_v) + \nu(x'|x,a;\theta_\nu)\big)^2.
\end{equation}
\end{small}
\vspace{-0.125in}

Note that in (\ref{eq:critic-Q-1}), the actions are generated by $q_c$. Thus, in (\ref{eq:critic-Q-2}), we first randomly sample $x$, then sample $a$ from $q_c(\cdot|x;\theta_c)$, and finally draw $x'$ from $q_d(\cdot|x,a;\theta_d)$. If the reward function is known (chosen by the designer of the system), then it is used to generate the reward signals $r=r(x,a)$ in (\ref{eq:critic-Q-2}), otherwise, a reward model has to be learned.

After estimating $Q_{q_c}$, we approximate $V_{q_c}$, the fixed-point of $\mathcal T_{q_c}$, using $\mathcal T_{q_c}$ definition in (\ref{eq:induced-operator1}) as 
$\mathcal T_{q_c}[V](x) \approx V(x) \approx \mathbb E_{a\sim q_c(\cdot|x)}\big[Q(x,a;\theta_q) - \log\frac{q_c(a|x;\theta_c)}{\pi(a|x;\theta_\pi)}\big]$.
This results in updating $V_{q_c}$ by taking several steps in the direction of the gradient of a sample average of the square-loss obtained by setting the two sides of the above equation to be equal, i.e.,

\vspace{-0.125in}
\begin{small}
\begin{equation}
\label{eq:critic-V-2}
\theta_v = \argmin_\theta \sum_{(x,a)\sim\mathcal E} \big(V(x;\theta) - Q(x,a;\theta_q) + \log\frac{q_c(a|x;\theta_c)}{\pi(a|x;\theta_\pi)}\big)^2,
\end{equation}
\end{small}
\vspace{-0.125in}

where $x$ is randomly sampled and $a\sim q_c(\cdot|x;\theta_c)$ (without sampling from the true environment).

{\bf Step 4. (actor update)} $\;$ We update the variational policy $q_c$ (policy improvement) by solving the optimization problem (\ref{eq:posterior-policy}) for the $Q$ estimated by the critic in Step~3. Since the $q_c$ that optimizes (\ref{eq:posterior-policy}) can be written as (\ref{eq:posterior-policy-closed-form}), we update it by minimizing $\text{KL}(q_c||q_c^Q)$. This results in the following {\em reverse} KL loss, for all $x\in\mathcal X$:  
$
\theta_c = \argmin_\theta \; \text{KL}\big(q_c(\cdot|x;\theta) || \frac{\pi(\cdot|x;\theta_\pi)\cdot\exp(Q(x,\cdot,;\theta_q))}{Z(x)}\big) = \argmin_\theta \; \mathbb E_{a\sim q_c}\big[\log(\frac{q_c(a|x;\theta)}{\pi(a|x;\theta_\pi)}) - Q(x,a,;\theta_q)\big]
$.
If we reparameterize $q_c$ using a transformation $a=f(x,\epsilon;\theta_c)$, where $\epsilon$ is a Gaussian noise, we can update $\theta_c$ by taking several steps in the direction of the gradient of a sample average of the above loss, i.e.,

\vspace{-0.125in}
\begin{small}
\begin{equation}
\label{eq:actor-update-2}
\theta_c = \argmin_\theta \sum_{(x,\epsilon)} \log\big(q_c(f(x,\epsilon;\theta)|x)) - Q(x,a,;\theta_q) - \log(\pi(a|x;\theta_\pi)\big).
\end{equation}
\end{small}
\vspace{-0.125in}

We can also compute $q_c$ as the closed-form solution to (\ref{eq:posterior-policy-closed-form}), as described in~\citet{Abdolmaleki18MP}. They refer to this as non-parametric representation of the variational distribution.

\vspace{-0.1in}
\subsection{The M-step of VMBPO}
\label{subsec:VMBPO-M-step}
\vspace{-0.1in}

As described in Section~\ref{sec:elbo}, the goal of the M-step is to improve the baseline policy $\pi$, given the variational model $q^*=(q_c^*,q_d^*)$ learned in the E-step, by solving the following optimization problem

\vspace{-0.15in}
\begin{small}
\begin{align}
\pi \leftarrow \argmax_{\pi\in\Pi} \; \mathcal J(q^*;\pi) := \mathbb E_{q^*}\big[\sum_{t=0}^{T-1} \eta\cdot r(x_t,a_t) - \log\frac{q^*_c(a_t|x_t)}{\pi(a_t|x_t)} - \log\frac{q^*_d(x_{t+1}|x_t,a_t)}{p(x_{t+1}|x_t,a_t)}\big].
\label{eq:M-step-Opt1}
\end{align}
\end{small}
\vspace{-0.15in}

A nice feature of (\ref{eq:M-step-Opt1}) is that it can be solved using only the variational model $q^*$, without the need for samples from the true environment $p$. However, it is easy to see that if the policy space considered in the M-step, $\Pi$, contains the one used for $q_c$ in the E-step, then we can trivially solve the M-step by setting $\pi=q_c^*$. Although this is an option, it is more efficient in practice to solve a regularized version of (\ref{eq:M-step-Opt1}). A practical way to regularize (\ref{eq:M-step-Opt1}) is to make sure that the new baseline policy $\pi$ remains close to the old one, which results in the following optimization problem  

\vspace{-0.15in}
\begin{small}
\begin{align}
\theta_\pi \leftarrow \argmax_{\theta} \; \mathbb E_{q^*}\big[\sum_{t=0}^{T-1} \log(\pi(a_t|x_t;\theta)) - \lambda\cdot\text{KL}\big(\pi(\cdot|x_t;\theta_\pi)||\pi(\cdot|x_t;\theta)\big)\big].
\label{eq:M-step-Opt2}
\end{align}
\end{small}
\vspace{-0.15in}

This is equivalent to the weighted MAP formulation used in the M-step of MPO~\citep{Abdolmaleki18MP}. In MPO, they define a prior over the parameter $\theta$ and add it as $\log P(\theta)$ to the objective function of (\ref{eq:M-step-Opt1}). Then, they set the prior $P(\theta)$ to a specific Gaussian and obtain an optimization problem similar to (\ref{eq:M-step-Opt2}) (see Section~3.3 in~\citealt{Abdolmaleki18MP}). However, since in their variational model $q_d=p$ (their approach is model-free), they need real samples to solve their optimization problem, while we can solve (\ref{eq:M-step-Opt2}) only by simulated samples (our approach is model-based).

\vspace{-0.1in}
\section{Experiments}
\label{sec:experiments}
\vspace{-0.1in}

To illustrate the effectiveness of VMBPO, we (i) compare it with several state-of-the-art RL methods on multiple domains, and (ii) assess the trade-off between sample efficiency via ablation analysis. 

\vspace{-0.125in}
\paragraph{Comparison with Baseline RL Algorithms}

We compare VMBPO with five baseline methods, MPO \citep{Abdolmaleki18MP}, SAC \citep{Haarnoja18SA} ---two popular model-free deep RL algorithms---and STEVE \citep{buckman2018sample}, PETS \citep{chua2018unsupervised}, and MBPO \citep{Janner19WT} ---three recent model-based RL algorithms. We also compare with the (E-step) model-free variant of VMBPO, which is known as VMBPO-MFE (see Appendix \ref{sec:model-free-E-step-Algo} for details). We evaluate the algorithms on one classical control benchmark (Pendulum) and five MuJoCo benchmarks (Hopper, Walker2D, HalfCheetah, Reacher, Reacher7DoF). The neural network architectures (for the the dynamics model, value functions, and policies) of VMBPO are similar to that of MBPO. Details on network architectures and hyperparameters are described in Appendix~\ref{appendix:experimental_details}. Since we parameterize $q_c$ in the E-step of VMBPO, according to Section \ref{subsec:VMBPO-M-step}, in the M-step we simply set $\pi=q_c^*$.
For the more difficult environments (Walker2D, HalfCheetah), the number of training steps is set to $400,000$, while for the medium one (Hopper) and for the simpler ones (Pendulum, Reacher, Reacher7DOF), it is set to $150,000$ and $50,000$ respectively. 
Policy performance is evaluated every $1000$ training iterations.
Each measurement is an average return over $5$ episodes, each generated with a separate random seed.
To smooth learning curves, data points are averaged over a sliding window of size $3$.

\begin{table}[th!]
\begin{adjustwidth}{-.5in}{-.5in}
\centering
\scalebox{0.7}{
\begin{tabular}{|l|c|c|c|c|c|c|c|}
\hline
\textbf{Env.} & \textbf{VMBPO} & \textbf{MBPO} & \textbf{STEVE} & \textbf{PETS} & \textbf{VMBPO-MFE} & \textbf{SAC} & \textbf{MPO} \\ [0.5ex]
\hline
\hline
Pendulum & -125.8 $\pm$ 73.7 & -126.0 $\pm$ 78.4 & -6385.3 $\pm$ 799.7 & -183.5 $\pm$ 1773.9 & -125.7 $\pm$ 130.1 & \textbf{-124.7} $\pm$ 199.0 & -131.9 $\pm$ 315.9  \\
\hline
Hopper & \textbf{2695.2} $\pm$ 902.1 & 2202.8 $\pm$ 938.3 & 279.0 $\pm$ 237.1 & 94.5 $\pm$ 114.2 & 1368.7 $\pm$ 184.1 & 2020.8 $\pm$ 954.1 & 1509.7 $\pm$ 756.0 \\
\hline
Walker2D & \textbf{3592.2} $\pm$ 1068.0 & 3574.9 $\pm$ 815.9 & 336.3 $\pm$ 196.3 & 93.5 $\pm$ 134.1 & 3334.5 $\pm$ 122.8 & 3026.4 $\pm$ 888.9 & 2889.4 $\pm$ 712.7 \\
\hline
HalfCheetah & 10712.4 $\pm$ 1266.9 & 10652.1 $\pm$ 899.4 & 482.9 $\pm$ 596.9 & \textbf{13272.6} $\pm$ 4926.4 & 4647.3 $\pm$ 505.8 & 9080.3 $\pm$ 1625.1 & 4969.2 $\pm$ 623.7 \\
\hline
Reacher & \textbf{-11.4} $\pm$ 27.0 & -12.6 $\pm$ 25.9 & -141.8 $\pm$ 355.7 & --- & -55.5 $\pm$ 39.0 & -23.9 $\pm$ 23.8 & -75.9 $\pm$ 336.7 \\
\hline
Reacher7DoF & \textbf{-13.8} $\pm$ 20.5 & -15.1 $\pm$ 98.8 & --- & -45.6 $\pm$ 36.1 & -33.5 $\pm$ 49.6 & -27.4 $\pm$ 112.0 & -38.4 $\pm$ 53.8 \\
\hline
\end{tabular}
}
\end{adjustwidth}
\caption{The mean $\pm$ standard deviation of final returns with the best hyper-parameter configuration. VMBPO significantly outperforms other baselines. VMBPO-MFE can improve over MPO but is quite unstable.}
\label{table:exp1_best_mean}
\vspace{-0.05in}
\end{table}

\begin{figure}[th!]
\centering
\includegraphics[width=0.88\textwidth]{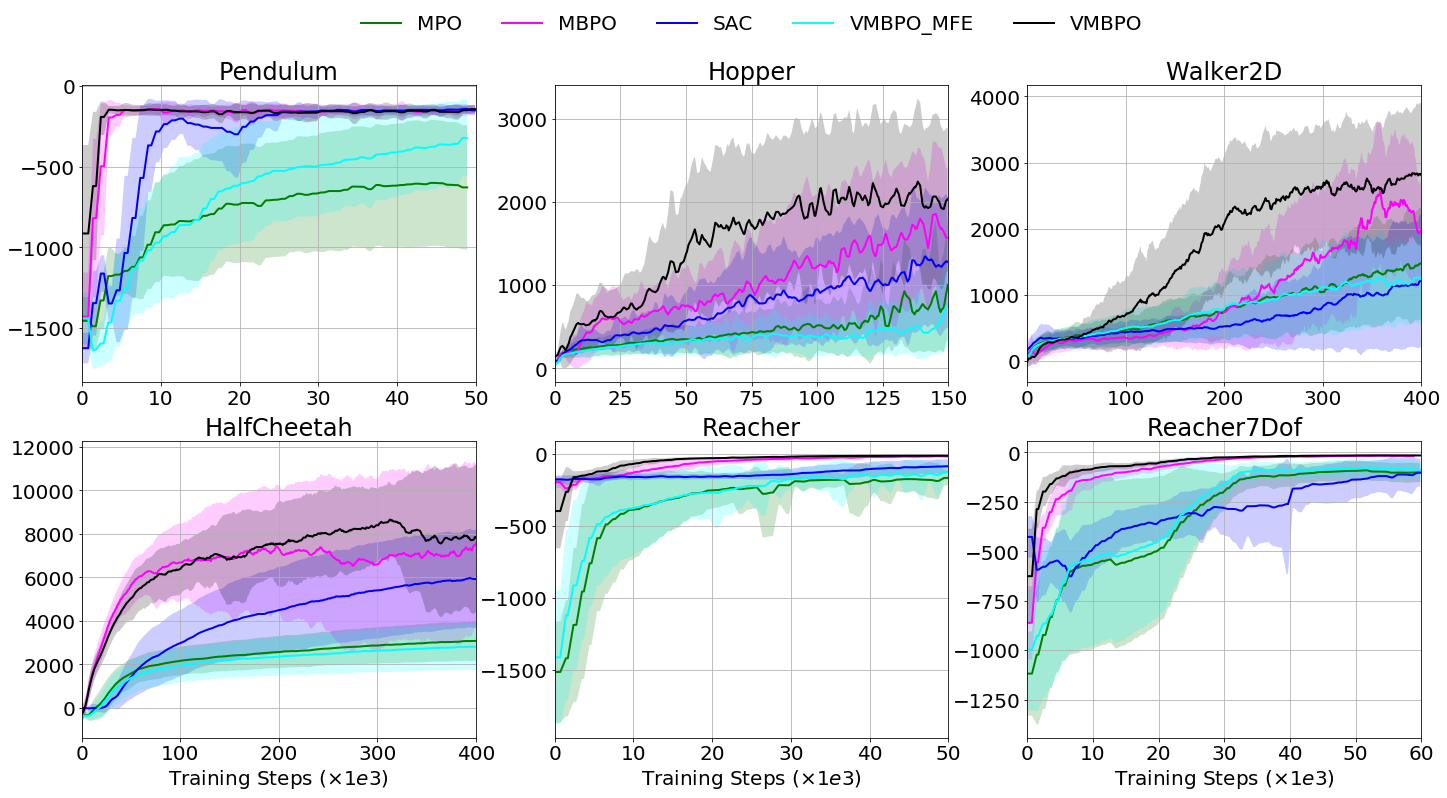}
\vspace{-0.1in}
\caption{Mean cumulative reward over all hyper-parameter and random-seed configurations. We do not include PETS and STEVE because the hyper-parameters are adopted from their papers.}
\label{fig:exp1_all_mean}
\vspace{-0.05in}
\end{figure}

Table~\ref{table:exp1_best_mean} and Figure~\ref{fig:exp1_best_mean} (Appendix \ref{appendix:additional_exp}) show the average return of VMBPO, VMBPO-MFE, and the baselines under the best hyperparameter configurations. VMBPO outperforms the baseline algorithms in most of the benchmarks, with significantly faster convergence speed and a higher reward. This verifies our conjecture about VMBPO:
(i) Utilizing synthetic data from the learned dynamics model generally improves data-efficiency of RL; (ii) Extra improvement in VMBPO is attributed to the fact that model is learned from the universal RL objective function.   
On the other hand,  VMBPO-MFE outperforms MPO in 4 out of 6 domains.
However, in some cases the learning may experience certain degradation issues (which lead to poor performance). This is due to the instability caused by sample variance amplification in critic learning with exponential-TD minimization (see Eq.~\ref{eq:vmbpo_mfe_critic_update_q} in Section \ref{sec:vmbpo_mfe_critic_update}). To alleviate this issue one may introduce a temperature term $\tau>0$ to the exponential-TD update \citep{borkar2002q}. However, tuning this hyper-parameter can be quite non-trivial.\footnote{The variance is further amplified with a large $\tau$, but the critic learning is hampered by a small $\tau$.}
Table~\ref{table:exp1_all_mean} (Appendix \ref{appendix:additional_exp}) and Figure~\ref{fig:exp1_all_mean} show the summary statistics averaged over all hyper-parameter/random-seed configurations and illustrate the sensitivity to hyperparameters of each method. VMBPO is more robust (with best performance on all the tasks) than other baselines.
This corroborates with the hypothesis that MBRL generally is more robust to hyperparameters than its model-free counterparts. 

\begin{figure}[th!]
\centering
\includegraphics[width=0.29\textwidth]{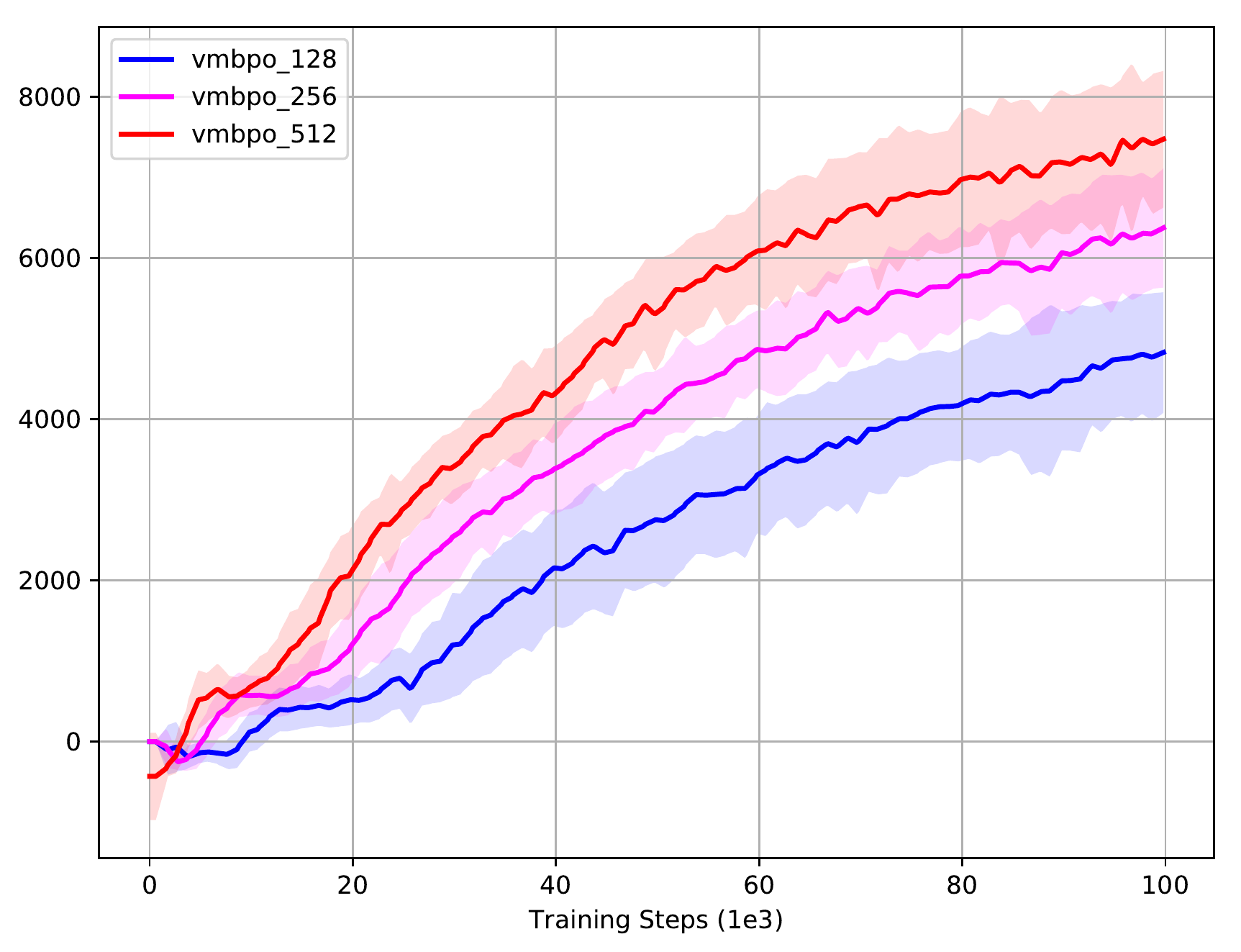}\includegraphics[width=0.29\textwidth]{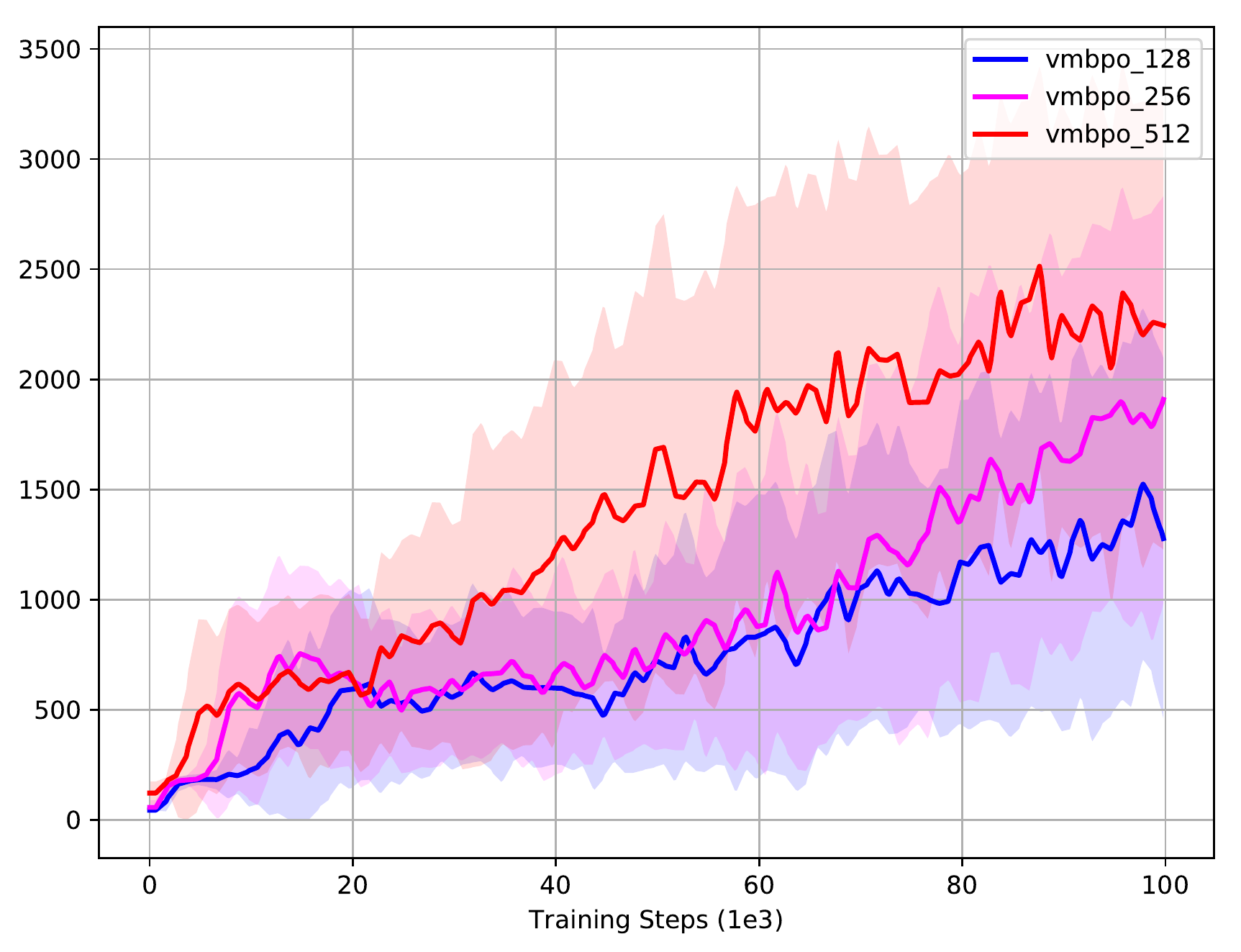}\includegraphics[width=0.29\textwidth]{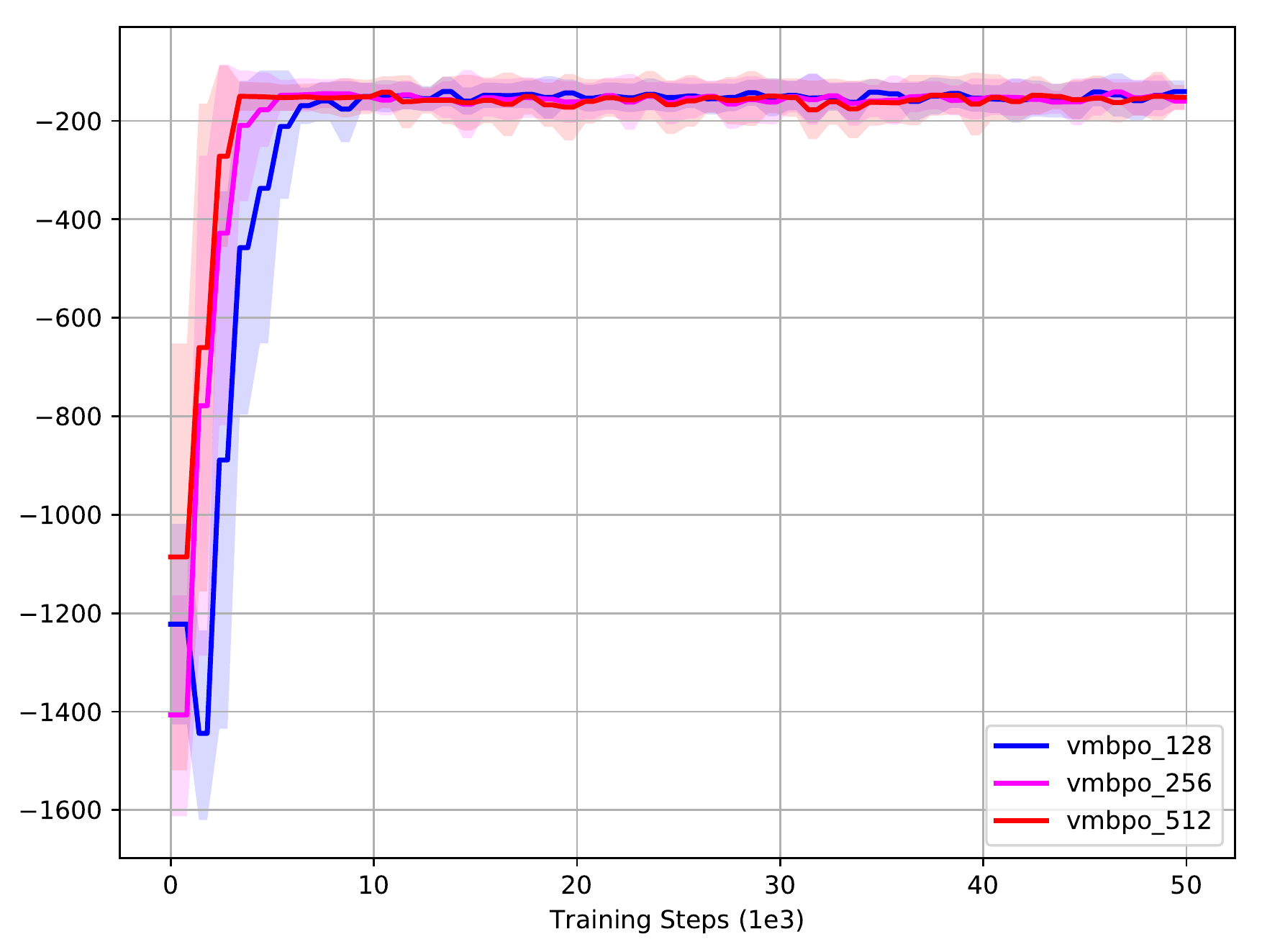}
\vspace{-0.1in}
\caption{Ablation analysis on RL data efficiency w.r.t. synthetic data generated by model $q_d$.}
\label{fig:exp2_best_mean}
\vspace{-0.05in}
\end{figure}

\vspace{-0.12in}
\paragraph{Ablation Analysis}

We now study the effects of data-efficiency of VMBPO w.r.t. the data samples generated from the dynamics model $q_d$. For simplicity, we only experiment with 3 standard benchmarks (Pendulum, Hopper, HalfCheetah) and with fewer learning steps ($50,000$, $100,000$, $100,000$). At each step, we update the actor and critic using $\{128, 256, 512\}$ synthetic samples.
Figure~\ref{fig:exp2_best_mean} shows the statistics averaged over all hyper-parameter/random-seed configurations and illustrates how synthetic data can help with policy learning. The results show that increasing the amount of synthetic data generally improves the policy convergence rate. In the early phase when the dynamics model is inaccurate, sampling data from it may slow down learning, while in the later phase with an improved model adding more synthetic data leads to a more significant performance boost.

%%%%%%%%%%%%%%%%%%%%%%%%%%%%%%%%%%%%%%%%%%%%%%%%%%%%%%%%%%%%%%%%%%%%%%%%%%
%%%%%%%%%%%%%%%%%%%%%%%%%%%%%%%%%%%%%%%%%%%%%%%%%%%%%%%%%%%%%%%%%%%%%%%%%%
%%%%%%%%%%%%%%%%%%%%%%%%%%%%%%%%%%%%%%%%%%%%%%%%%%%%%%%%%%%%%%%%%%%%%%%%%%
%%%%%%%%%%%%%%%%%%%%%%%%%%%%%%%%%%%%%%%%%%%%%%%%%%%%%%%%%%%%%%%%%%%%%%%%%%
%%%%%%%%%%%%%%%%%%%%%%%%%%%%%%%%%%%%%%%%%%%%%%%%%%%%%%%%%%%%%%%%%%%%%%%%%%

\vspace{-0.1in}
\section{Conclusion}
\label{sec:conclu}
\vspace{-0.1in}

We formulated the problem of jointly learning and improving model and policy in RL as a variational lower-bound of a log-likelihood, and proposed EM-type algorithms to solve it. Our algorithms, called variational model-based policy optimization (VMBPO) use model-based policy iteration for solving the E-step. We compared our (E-step) model-based and model-free algorithms with each other, and with a number of state-of-the-art model-based (e.g.,~MBPO) and model-free (e.g.,~MPO) RL algorithms, and showed its sample efficiency and performance. 

We briefly discussed VMBPO style algorithms in which the E-step is solved by model-based policy iteration methods. However, full implementation of these algorithms and studying their relationship with the existing methods requires more work that we leave for future. Another future directions are: {\bf 1)} finding more efficient implementation for VMBPO, and {\bf 2)} using VMBPO style algorithms in solving control problems from high-dimensional observations, by learning a low-dimensional latent space and a latent space dynamics, and perform control there. This class of algorithms is referred to as learning controllable embedding~\citep{Watter15EC,Levine20PC}.

%%%%%%%%%%%%%%%%%%%%%%%%%%%%%%%%%%%%%%%%%%%%%%%%%%%%%%%%%%%%%%%%%%%%%%%%%%
%%%%%%%%%%%%%%%%%%%%%%%%%%%%%%%%%%%%%%%%%%%%%%%%%%%%%%%%%%%%%%%%%%%%%%%%%%
%%%%%%%%%%%%%%%%%%%%%%%%%%%%%%%%%%%%%%%%%%%%%%%%%%%%%%%%%%%%%%%%%%%%%%%%%%
%%%%%%%%%%%%%%%%%%%%%%%%%%%%%%%%%%%%%%%%%%%%%%%%%%%%%%%%%%%%%%%%%%%%%%%%%%
%%%%%%%%%%%%%%%%%%%%%%%%%%%%%%%%%%%%%%%%%%%%%%%%%%%%%%%%%%%%%%%%%%%%%%%%%%

\newpage
\section*{Broader Impact}

This work proposes methods to jointly learn and improve model and policy in reinforcement learning. We see learning control-aware models as a promising direction to address an important challenge in model-based reinforcement learning: creating a balance between the bias in simulated data and the ease of data generation (sample efficiency).  

%%%%%%%%%%%%%%%%%%%%%%%%%%%%%%%%%%%%%%%%%%%%%%%%%%%%%%%%%%%%%%%%%%%%%%%%%%
%%%%%%%%%%%%%%%%%%%%%%%%%%%%%%%%%%%%%%%%%%%%%%%%%%%%%%%%%%%%%%%%%%%%%%%%%%
%%%%%%%%%%%%%%%%%%%%%%%%%%%%%%%%%%%%%%%%%%%%%%%%%%%%%%%%%%%%%%%%%%%%%%%%%%
%%%%%%%%%%%%%%%%%%%%%%%%%%%%%%%%%%%%%%%%%%%%%%%%%%%%%%%%%%%%%%%%%%%%%%%%%%
%%%%%%%%%%%%%%%%%%%%%%%%%%%%%%%%%%%%%%%%%%%%%%%%%%%%%%%%%%%%%%%%%%%%%%%%%%

\bibliographystyle{plainnat}
\bibliography{V-MBPO-Reference}

%%%%%%%%%%%%%%%%%%%%%%%%%%%%%%%%%%%%%%%%%%%%%%%%%%%%%%%%%%%%%%%%%%%%%%%%%%
%%%%%%%%%%%%%%%%%%%%%%%%%%%%%%%%%%%%%%%%%%%%%%%%%%%%%%%%%%%%%%%%%%%%%%%%%%
%%%%%%%%%%%%%%%%%%%%%%%%%%%%%%%%%%%%%%%%%%%%%%%%%%%%%%%%%%%%%%%%%%%%%%%%%%
%%%%%%%%%%%%%%%%%%%%%%%%%%%%%%%%%%%%%%%%%%%%%%%%%%%%%%%%%%%%%%%%%%%%%%%%%%
%%%%%%%%%%%%%%%%%%%%%%%%%%%%%%%%%%%%%%%%%%%%%%%%%%%%%%%%%%%%%%%%%%%%%%%%%%

\newpage
\appendix

%%%%%%%%%%%%%%%%%%%%%%%%%%%%%%%%%%%%%%%%%%%%%%%%%%%%%%%%%%%%%%%%%%%%%%%%%%
%%%%%%%%%%%%%%%%%%%%%%%%%%%%%%%%%%%%%%%%%%%%%%%%%%%%%%%%%%%%%%%%%%%%%%%%%%
%%%%%%%%%%%%%%%%%%%%%%%%%%%%%%%%%%%%%%%%%%%%%%%%%%%%%%%%%%%%%%%%%%%%%%%%%%
%%%%%%%%%%%%%%%%%%%%%%%%%%%%%%%%%%%%%%%%%%%%%%%%%%%%%%%%%%%%%%%%%%%%%%%%%%
%%%%%%%%%%%%%%%%%%%%%%%%%%%%%%%%%%%%%%%%%%%%%%%%%%%%%%%%%%%%%%%%%%%%%%%%%%

\section{Proofs of Section~\ref{sec:elbo}}
\label{app:proofs1}

%%%%%%%%%%%%%%%%%%%%%%%%%%%%%%%%%%%%%%%%%%%%%%%%%%%%%%%%%%%%%%%%%%%%%%%%%%
%%%%%%%%%%%%%%%%%%%%%%%%%%%%%%%%%%%%%%%%%%%%%%%%%%%%%%%%%%%%%%%%%%%%%%%%%%
%%%%%%%%%%%%%%%%%%%%%%%%%%%%%%%%%%%%%%%%%%%%%%%%%%%%%%%%%%%%%%%%%%%%%%%%%%

\subsection{Proof of Lemma~\ref{lem:optimal-operator1}}
\label{sec:prelim_tech_results}

Before proving Lemma~\ref{lem:optimal-operator1}, we first state and prove the following results.

\begin{lemma}
\label{lem:reg_max}
For any state $x\in\mathcal X$, action-value function $Q$, and policy $\pi$, we have
\begin{equation}
\label{eq:reg_max1}
\max_{q_c\in\Delta_{\mathcal A}}\mathbb E_{a\sim q_c(\cdot|x)}\Big[Q(x,a)-\log\frac{q_c(a|x)}{\pi(a|x)} \Big] = \log\mathbb E_{a\sim\pi(\cdot|x)}\big[\exp\big(Q(x,a)\big)\big].
\end{equation}
Analogously, for any state-action pair $(x,a)\in\mathcal X\times\mathcal A$, value function $V$, and transition kernel $p(\cdot|x,a)$, we have
\begin{equation}
\label{eq:reg_max2}
\max_{q_d\in\Delta_{\mathcal X}}\mathbb E_{x'\sim q_d(\cdot|x,a)}\Big[V(x') - \log\frac{q_d(x'|x,a)}{p(x'|x,a)}\Big] = \log\mathbb E_{x'\sim p(\cdot|x,a)}\big[\exp\big(V(x')\big)\big].
\end{equation}
\end{lemma}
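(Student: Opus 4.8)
The plan is to prove the first identity~(\ref{eq:reg_max1}) by a ``complete-the-square'' argument that turns the regularized objective into a KL divergence against an explicit candidate maximizer, and then obtain the second identity~(\ref{eq:reg_max2}) by the identical computation under the substitutions $a\mapsto x'$, $q_c\mapsto q_d$, $\pi\mapsto p$, and $Q\mapsto V$. This route sidesteps any calculus-of-variations or Lagrange-multiplier computation over the (possibly infinite-dimensional) simplex.

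First I would introduce the candidate maximizer. Fixing $x$, set $Z(x):=\mathbb E_{a\sim\pi(\cdot|x)}[\exp(Q(x,a))]$, assumed finite, and define $\tilde q_c(a|x):=\pi(a|x)\exp(Q(x,a))/Z(x)$. By construction $\tilde q_c(\cdot|x)\in\Delta_{\mathcal A}$: it is nonnegative, integrates to one, and is absolutely continuous w.r.t.~$\pi$. The key algebraic step is the pointwise identity
\[
Q(x,a)-\log\frac{q_c(a|x)}{\pi(a|x)} = \log\frac{\tilde q_c(a|x)}{q_c(a|x)} + \log Z(x),
\]
which follows by substituting the definition of $\tilde q_c$ and cancelling the $\pi(a|x)$ and $Q(x,a)$ terms.

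Next I would take the expectation of this identity over $a\sim q_c(\cdot|x)$ for an arbitrary $q_c\in\Delta_{\mathcal A}$. Since $\log Z(x)$ is independent of $a$, this gives
\[
\mathbb E_{a\sim q_c(\cdot|x)}\Big[Q(x,a)-\log\frac{q_c(a|x)}{\pi(a|x)}\Big] = \log Z(x) - \text{KL}\big(q_c(\cdot|x)\,\|\,\tilde q_c(\cdot|x)\big).
\]
Because the KL divergence is nonnegative and vanishes if and only if $q_c=\tilde q_c$, the right-hand side is maximized at $q_c=\tilde q_c$ with optimal value $\log Z(x)=\log\mathbb E_{a\sim\pi(\cdot|x)}[\exp(Q(x,a))]$, which is exactly~(\ref{eq:reg_max1}). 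Repeating the same chain of equalities with the stated substitutions yields~(\ref{eq:reg_max2}).

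I expect the only real obstacle to be well-definedness rather than the algebra: one must assume (or verify from boundedness of $r$ and $V$ over the relevant horizon) that the normalizer $Z(x)$ is finite so that $\tilde q_c$ is a genuine probability distribution, and implicitly restrict attention to $q_c$ absolutely continuous w.r.t.~$\pi$ so that the KL term is finite (otherwise the objective is $-\infty$ and the bound is trivial). Under these mild regularity conditions the argument is elementary.
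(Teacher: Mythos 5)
Your proof is correct and in substance the same as the paper's: the paper rewrites the objective as $\mathbb E_{a\sim q_c(\cdot|x)}[(Q(x,a)+\log\pi(a|x))-\log q_c(a|x)]$ and then invokes Lemma~4 of Nachum et al.\ (2017), which is precisely the Gibbs/KL-decomposition identity you derive from scratch via the exponentially-tilted candidate $\tilde q_c\propto\pi\exp(Q)$. The only difference is that your argument is self-contained (and you correctly flag finiteness of $Z(x)$ and absolute continuity of $q_c$ w.r.t.\ $\pi$ as the only regularity caveats), whereas the paper outsources that key step to a citation.
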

\begin{proof}
We only prove (\ref{eq:reg_max1}) here, since the proof of (\ref{eq:reg_max2}) follows similar arguments. The proof of (\ref{eq:reg_max1}) comes from the following sequence of equalities:

\vspace{-0.15in}
\begin{small}
\begin{align*}
\max_{q_c\in\Delta_{\mathcal A}}\mathbb E_{a\sim q_c(\cdot|x)}\Big[Q(x,a)&-\log\frac{q_c(a|x)}{\pi(a|x)} \Big] = \max_{q_c\in\Delta_{\mathcal A}}\mathbb E_{a\sim q_c(\cdot|x)}[(Q(x,a)+\log\pi(a|x))-\log q_c(a|x)] \\ 
&\stackrel{\text{(a)}}{=} \log\int_a\exp (Q(x,a)+\log\pi(a|x)) = \log\mathbb E_{a\sim\pi(\cdot|x)}\big[\exp\big(Q(x,a)\big)\big].
\end{align*}
\end{small}
\vspace{-0.15in}

{\bf (a)} This follows from Lemma~4 in~\citet{Nachum17BG}.
\end{proof}

We now turn to the proof of our main lemma.
\begin{proof}[Proof of Lemma~\ref{lem:optimal-operator1}] From (\ref{eq:qc-Operator}), for any state $x\in\mathcal X$, we may write the $q_c$-induced operator as

\vspace{-0.15in}
\begin{small}
\begin{align*}
{\mathcal T}_{q_c}[V](x) &= \mathbb E_{a\sim q_c(\cdot|x)}\Big[\eta\cdot r(x,a) - \log\frac{q_c(a|x)}{\pi(a|x)} + \max_{q_d\in\Delta_{\mathcal X}}\mathbb E_{x'\sim q_d(\cdot|x,a)}\big[V(x') - \log\frac{q_d(x'|x,a)}{p(x'|x,a)}\big]\Big] \\    
&\stackrel{\text{(a)}}{=} \mathbb E_{a\sim q_c(\cdot|x)}\Big[\eta\cdot r(x,a) - \log\frac{q_c(a|x)}{\pi(a|x)} + \log\mathbb E_{x'\sim p(\cdot|x,a)}\big[\exp(V(x'))\big]\Big] \\
&\stackrel{\text{(b)}}{=} \mathbb E_{a\sim q_c(\cdot|x)}\Big[Q(x,a) - \log\frac{q_c(a|x)}{\pi(a|x)}\Big].
\end{align*}
\end{small}
\vspace{-0.15in}

{\bf (a)} From Lemma~\ref{lem:reg_max}.

{\bf (b)} From the definition of the $Q$-function in (\ref{eq:Q-function}).

This concludes the proof of (\ref{eq:induced-operator1}), the first statement of Lemma~\ref{lem:optimal-operator1}. Now to prove the second statement (Eq.~\ref{eq:optimal-operator1}), we may write

\vspace{-0.15in}
\begin{small}
\begin{align*}
\mathcal T[V](x) &= \max_{q_c\in\Delta_{\mathcal A}}\mathbb E_{a\sim q_c(\cdot|x)}\Big[\eta\cdot r(x,a)-\log\frac{q_c(a|x)}{\pi(a|x)} +\max_{q_d\in\Delta_{\mathcal X}}\mathbb E_{x'\sim q_d(\cdot|x,a)}\big[V(x') - \log\frac{q_d(x'|x,a)}{p(x'|x,a)}\big]\Big] \\
&\stackrel{\text{(a)}}{=} \max_{q_c\in\Delta_{\mathcal A}}\mathbb E_{a\sim q_c(\cdot|x)}\Big[\eta\cdot r(x,a)-\log\frac{q_c(a|x)}{\pi(a|x)} + \log\mathbb E_{x'\sim p(\cdot|x,a)}\big[\exp(V(x'))\big]\Big] \\
%&= \max_{q_c\in\Delta_{\mathcal A}}\int_{a}q_c(a|x) \Big(\eta\cdot r(x,a) + \log\mathbb E_{x'\sim p(\cdot|x,a)}\big[\exp\big(V(x')\big)\big] - \log\frac{q_c(a|x)}{\pi(a|x)}\Big) \nonumber \\
&\stackrel{\text{(b)}}{=} \log\mathbb E_{a\sim\pi(\cdot|x)}\Big[\exp\Big(\eta\cdot r(x,a) + \log\mathbb E_{x'\sim p(\cdot|x,a)}\big[\exp\big(V(x')\big)\big]\Big)\Big] \\
&= \log\mathbb E_{a\sim\pi(\cdot|x)}\Big[\exp\big(\eta\cdot r(x,a)\big)\mathbb E_{x'\sim p(\cdot|x,a)}\big[\exp\big(V(x')\big)\big]\Big] \\ 
&= \log\mathbb E_{a\sim\pi(\cdot|x),x'\sim p(\cdot|x,a)}\big[\exp\big(\eta\cdot r(x,a)+V(x')\big)\big].
\end{align*}
\end{small}
\vspace{-0.15in}

{\bf (a)} and {\bf (b)} both come from Lemma~\ref{lem:reg_max}. 

This concludes the proof of (\ref{eq:optimal-operator1}), the second statement of Lemma~\ref{lem:optimal-operator1}.
\end{proof}

%%%%%%%%%%%%%%%%%%%%%%%%%%%%%%%%%%%%%%%%%%%%%%%%%%%%%%%%%%%%%%%%%%%%%%%%%%
%%%%%%%%%%%%%%%%%%%%%%%%%%%%%%%%%%%%%%%%%%%%%%%%%%%%%%%%%%%%%%%%%%%%%%%%%%
%%%%%%%%%%%%%%%%%%%%%%%%%%%%%%%%%%%%%%%%%%%%%%%%%%%%%%%%%%%%%%%%%%%%%%%%%%

%\newpage

%\subsection{Proofs of Lemma~\ref{lem:qc-induced-operator} and Lemma~\ref{lem:optimal-operators}}
\subsection{Proof of Lemma~\ref{lem:qc-induced-optimal-operators}}
\label{appendix:lem:fixed_point}

We only prove the properties of the optimal operator, $\mathcal T$, here. The proof for the $q_c$-induced operator, $\mathcal T_{q_c}$, follows similar arguments. 

\textbf{\noindent{1. Monotonicity:}} 
For any functions $V,\,W:\mathcal X\rightarrow\mathbb R$, such that $V(x)\leq W(x),\;\forall x\in\mathcal X$, we have $\mathcal T[V](x)\leq \mathcal T[W](x),\;\forall x\in\mathcal X$.

\begin{proof}
For the case of $x\in\mathcal X^0$ (the set of terminal states), the property trivially holds. For the case of $x\in\mathcal X$, from the definition of the optimal operator in (\ref{eq:optimal-operator1}), it is easy to see that for all $x\in\mathcal X$, we have

\vspace{-0.15in}
\begin{small}
\begin{equation*}
\eta \cdot r(x,a)+\log\int_{x'}p(x'|x,a)\exp\big(V(x')\big)\leq
\eta \cdot r(x,a)+\log\int_{x'}p(x'|x,a)\exp\big(W(x')\big),
\end{equation*}
\end{small}
\vspace{-0.15in}

which means $\mathcal T[V](x) = \log\mathbb E_{a\sim\pi(\cdot|x),x'\sim p(\cdot|x,a)}\big[\exp\big(\eta\cdot r(x,a) + V(x')\big)\big] \leq \mathcal T[W](x) = \log\mathbb E_{a\sim\pi(\cdot|x),x'\sim p(\cdot|x,a)}\big[\exp\big(\eta\cdot r(x,a) + W(x')\big)\big],\;\forall x\in\mathcal X$. This completes the proof of monotonicity. 
\end{proof}

\textbf{\noindent{2. Contraction:}} There exists a vector with positive components, i.e., $\rho:\mathcal X\rightarrow\mathbb R_{\geq 0}$, and a discounting factor $0<\gamma<1$ such that 

\vspace{-0.15in}
\begin{small}
\begin{equation*}
\|\mathcal T[V]-\mathcal T[W]\|_\rho\leq \gamma\|V-W\|_\rho,
\end{equation*}
\end{small}
\vspace{-0.15in}

where the weighted norm is defined as $\|V\|_\rho=\max_{x\in\mathcal X} \frac{V(x)}{\rho(x)}$.
\begin{proof}
For the case of $x\in\mathcal X^0$ (the set of terminal states), the property trivially holds because the contraction maps to zero. For the case of $x\in\mathcal X$, following the construction in Proposition 3.3.1 in~\citet{bertsekas1995dynamic}, consider a risk-sensitive entropy-regularized stochastic shortest path problem (via dynamic exponential risk formulation from \cite{borkar2002q}), where the reward are all equal to $1/\eta$. Based on similar arguments as in Proposition 3.3.1, there exists a fixed point value function $\hat V$, such that

\vspace{-0.15in}
\begin{small}
\begin{equation*}
\hat V(x)=1 + \max_{q_c\in\Delta_{\mathcal A}} \int_{a}q_c(a|x)\left(\log\mathbb E_{x'\sim p(\cdot|x,a)}\big[\exp\big(\hat V(x')\big)\big]-\log \frac{q_c(a|x)}{\pi(a|x)}\right).
\end{equation*}
\end{small}
\vspace{-0.15in}

Using the results from Lemma \ref{lem:optimal-operator1}, the above statement further implies that:

\vspace{-0.15in}
\begin{small}
\begin{equation*}
\hat V(x)
= 1 + \log\mathbb E_{a\sim\pi(\cdot|x),x'\sim p(\cdot|x,a)}\big[\exp\big(\hat V(x')\big)\big].
\end{equation*}
\end{small}
\vspace{-0.15in}

Notice that $\hat V(x)\geq 1$ for all $x\in\mathcal X$. By defining $\rho(x)=\hat V(x)$, and by constructing $\gamma=\max_{x\in\mathcal X} (\rho(x)-1)/\rho(x)$, one immediately has $0<\gamma<1$, and

\vspace{-0.15in}
\begin{small}
\begin{align*}
&\max_{q_c\in\Delta_{\mathcal A}}\mathbb E_{a\sim q_c(\cdot|x)}\Big[\max_{q_d\in\Delta_{\mathcal X}}\int_{x'}q_d(x'|x,a)\big(V(x') - \log\frac{q_d(x'|x,a)}{p(x'|x,a)}\big)-\log\frac{q_c(a|x)}{\pi(a|x)}\Big]\\
=&\log\mathbb E_{a\sim\pi(\cdot|x),x'\sim p(\cdot|x,a)}\big[\exp\big(\hat V(x')\big)\big]= \rho(x) - 1\leq \gamma\rho(x).
\end{align*}
\end{small}
\vspace{-0.15in}

Then by following the same lines of analysis as in Proposition 1.5.2 of \cite{bertsekas1995dynamic}, one can show that $\mathcal T$ is a contraction operator.
\end{proof}

\noindent{\textbf{3. Unique Fixed-point Solution}} The optimal value function $V_\pi$ is its unique fixed-point, i.e.,~${\mathcal T}[V_\pi](x)=V_\pi(x),\;\forall x\in\mathcal X$. 
\begin{proof}
Let $V_{\pi}(x)$ be the optimal value function of the E-step problem in Eq.~\ref{eq:E-Step1}, and let $V^*$ be a fixed point solution: $V(x)=\mathcal T[V](x)$, for any $x\in\mathcal X$.
For the case when $x\in\mathcal X^0$, the following result trivially holds: $V_{\pi}(x)=\mathcal T[V_{\pi}](x)=V^*(x)=0$. Below, we show the equality for the case of $x_0\in\mathcal X$.

First, we want to show that $V_{\pi}(x_0)\leq V^*(x_0)$. Consider the greedy policy $\overline q^*_c$ constructed from the Bellman operator $\arg\max_{q_c\in\Delta}\mathcal T_{q_c}[V^*](x)$. Recall that $V^*(x)$ is a fixed point solution to $V(x)=\mathcal T[V](x)$, for any $x\in\mathcal X$. 
Then for any bounded initial value function $V_0$, the contraction property of Bellman operator $\mathcal T_{\overline q^*_c}$ implies that 

\vspace{-0.15in}
\begin{small}
\begin{align*}
&V^*(x)=\lim_{n\rightarrow\infty}\mathcal T^n_{\overline q^*_c}[V_0](x)\\
=&\lim_{n\rightarrow\infty}\max_{q_d\in\Delta_{\mathcal X}}\mathbb E\big[\sum_{t=0}^{n-1} \eta\cdot r(\obs_t,\control_t)-\text{KL}(\overline q^*_c||\pi)(\obs_t)-\text{KL}(q_d||p)(\obs_t,\control_t)\mid q_d,\overline q^*_c,P_0\big],
\end{align*}
\end{small}
\vspace{-0.15in}

for which the transient assumption of stopping MDPs further implies that

\vspace{-0.15in}
\begin{small}
\begin{equation*}
V^*(x)=\max_{q_d\in\Delta_{\mathcal X}}\mathbb E\big[\sum_{t=0}^{\mathrm T^*-1} \eta\cdot r(\obs_t,\control_t)-\text{KL}(\overline q^*_c||\pi)(\obs_t)-\text{KL}(q_d||p)(\obs_t,\control_t)\mid q_d,\overline q^*_c,P_0\big].
\end{equation*}
\end{small}
\vspace{-0.15in}

Since $\overline q^*_c$ is a feasible solution to the E-step problem, this further implies that $V_{\pi}(x_0)\leq V^*(x_0)$.

Second, we want to show that $V_{\pi}(x_0)\geq V^*(x_0)$. Consider the optimal policy $q^*_c$ of the E-step problem. Note that $V^*$ is a fixed point solution to equation: $V^*(x)=\mathcal T[V^*](x)$, for any $x\in\mathcal X$. Immediately the above result yields the following inequality:

\vspace{-0.15in}
\begin{small}
\begin{equation*}
V^*(x)=\mathcal T_{\overline q^*_c}[V^*](x)\leq \mathcal T_{q^*_c}[V^*](x),\,\,\forall x\in\mathcal X,
\end{equation*}
\end{small}
\vspace{-0.15in}

the first equality holds because $\overline q^*_c(\cdot|x)$ is the minimizer of the optimization problem in $\mathcal T[V^*](x)$, $x\in\mathcal X$.
By recursively applying  Bellman operator $\mathcal T_{q_c^*}$, one has the following result:

\vspace{-0.15in}
\begin{small}
\begin{align*}
&V^*(x)\leq \lim_{n\rightarrow\infty}\mathcal T^n_{q^*_c}[V^*](x)\\
=&\max_{q_d\in\Delta_{\mathcal X}}\mathbb E\big[\sum_{t=0}^{\mathrm T^*-1} \eta\cdot r(\obs_t,\control_t)-\text{KL}( q^*_c||\pi)(\obs_t)-\text{KL}(q_d||p)(\obs_t,\control_t)\mid q_d, q^*_c,x_0=x\big]=V_{\pi}(x),\,\,\forall x\in\mathcal X.
\end{align*}
\end{small}
\vspace{-0.15in}

Combining the above analysis, we prove the claim of $V_{\pi}(x_0)=V^*(x_0)$, and the greedy policy of the fixed-point equation, i.e., $\overline q^*_c$, is an optimal policy to the E-step problem.
\end{proof}

%%%%%%%%%%%%%%%%%%%%%%%%%%%%%%%%%%%%%%%%%%%%%%%%%%%%%%%%%%%%%%%%%%%%%%%%%%
%%%%%%%%%%%%%%%%%%%%%%%%%%%%%%%%%%%%%%%%%%%%%%%%%%%%%%%%%%%%%%%%%%%%%%%%%%
%%%%%%%%%%%%%%%%%%%%%%%%%%%%%%%%%%%%%%%%%%%%%%%%%%%%%%%%%%%%%%%%%%%%%%%%%%

%\newpage

\subsection{Proof of Proposition~\ref{prop:Q-function}}
\label{subsec:proof-prop1}

\begin{proof}
The proof follows by combining the definition of the $Q$-function (\ref{eq:Q-function}) and Lemma~\ref{lem:qc-induced-optimal-operators} that indicates $V_\pi$ is the unique fixed-point of the optimal operator $\mathcal T$. Therefore, for any $x\in\mathcal X$, we can write 

\vspace{-0.15in}
\begin{small}
\begin{align*}
V_\pi(x) &= \mathcal T[V_\pi](x) \stackrel{\text{(a)}}{=} \log\mathbb E_{a\sim\pi(\cdot|x)}\Big[\exp(\eta\cdot r(x,a)) \cdot \mathbb E_{x'\sim p(\cdot|x,a)}\big[\exp(V(x'))\big]\Big] \\ 
&\stackrel{\text{(b)}}{=} \log\mathbb E_{a\sim\pi(\cdot|x)}\big[\exp(Q(x,a))\big].
\end{align*}
\end{small}
\vspace{-0.15in}

{\bf (a)} This is from (\ref{eq:optimal-operator1}), the second statement of Lemma~\ref{lem:optimal-operator1}.

{\bf (b)} If we apply exponential to both sides of (\ref{eq:Q-function}), we see that what is inside the bracket is equal to $\exp(Q(x,a))$. 

This concludes the proof. 
%This concludes the proof of (\ref{eq:optimal-V-Q-functions}), the first statement of Proposition~\ref{prop:Q-function}. 
%
%The proof of (\ref{eq:optimal-Q-recursion}), the second statement of Proposition~\ref{prop:Q-function}, follows by replacing $V$ with $V_\pi$ in (\ref{eq:Q-function}), and then replacing $V_\pi$ from (\ref{eq:optimal-V-Q-functions}) that was proved above. 
\end{proof}

%%%%%%%%%%%%%%%%%%%%%%%%%%%%%%%%%%%%%%%%%%%%%%%%%%%%%%%%%%%%%%%%%%%%%%%%%%
%%%%%%%%%%%%%%%%%%%%%%%%%%%%%%%%%%%%%%%%%%%%%%%%%%%%%%%%%%%%%%%%%%%%%%%%%%
%%%%%%%%%%%%%%%%%%%%%%%%%%%%%%%%%%%%%%%%%%%%%%%%%%%%%%%%%%%%%%%%%%%%%%%%%%

\subsection{Proof of Lemma~\ref{lem:policies}}
\label{subsec:proof-lemma-variational-distributions}

\begin{proof}
Since the variational policy, $q_c^Q$ is the solution to the optimization problem (\ref{eq:posterior-policy}), following Corollary~6 in~\citet{Nachum17BG}, we may write that 

\vspace{-0.15in}
\begin{small}
\begin{equation*}
q_c^Q(a|x) = \frac{\exp\big(Q(x,a)+\log\pi(a|x)\big)}{\int_a\exp\big(Q(x,a)+\log\pi(a|x)\big)} = \frac{\pi(a|x)\cdot\exp\big(Q(x,a)\big)}{\mathbb E_{a\sim\pi(\cdot|x)}\big[\exp\big(Q(x,a)\big)\big]}\;.
\end{equation*}
\end{small}
\vspace{-0.15in}

This proves (\ref{eq:posterior-policy-closed-form}), the second statement of Lemma~\ref{lem:policies}. To prove (\ref{eq:posterior-dynamics-closed-form}), the first statement of Lemma~\ref{lem:policies}, we use the fact that the variational dynamics, $q_d^V$ is the solution to the optimization problem (\ref{eq:posterior-dynamics}), and thus, following Corollary~6 in~\citet{Nachum17BG}, we may write that 

\vspace{-0.15in}
\begin{small}
\begin{equation*}
q_d^V(x'|x,a) = \frac{\exp\big(V(x')+\log p(x'|x,a)\big)}{\int_a\exp\big(V(x')+\log p(x'|x,a)\big)} = \frac{p(x'|x,a)\cdot\exp\big(V(x')\big)}{\mathbb E_{x'\sim p(\cdot|x,a)}\big[\exp\big(V(x')\big)\big]}\;.
\end{equation*}
\end{small}
\vspace{-0.15in}

This completes the proof. The second equality in (\ref{eq:posterior-dynamics-closed-form}) is straightforward, because by taking exponential from both sides of (\ref{eq:Q-function}), we have

\vspace{-0.15in}
\begin{small}
\begin{equation*}
\mathbb E_{x'\sim p(\cdot|x,a)}\big[\exp\big(V(x')\big)\big] = \exp\big(Q(x,a) - \eta\cdot r(x,a)\big).
\end{equation*}
\end{small}
\end{proof}

%%%%%%%%%%%%%%%%%%%%%%%%%%%%%%%%%%%%%%%%%%%%%%%%%%%%%%%%%%%%%%%%%%%%%%%%%%
%%%%%%%%%%%%%%%%%%%%%%%%%%%%%%%%%%%%%%%%%%%%%%%%%%%%%%%%%%%%%%%%%%%%%%%%%%
%%%%%%%%%%%%%%%%%%%%%%%%%%%%%%%%%%%%%%%%%%%%%%%%%%%%%%%%%%%%%%%%%%%%%%%%%%

%\newpage
\subsection{Proof of Lemma~\ref{lem:policy_iteration}}
\label{app:proof-lemma5}

In Lemma~\ref{lem:qc-induced-optimal-operators}, we proved that the $q_c$-induced operator, $\mathcal T_{q_c}$, is monotonic and contraction. Therefore, it is clear that for any $q_c$, starting from an arbitrary value function $V$ and iteratively applying $\mathcal T_{q_c}$, we will converge to the fixed-point of this operator, i.e.,~$V_{q_c}=\mathcal T_{q_c}V_{q_c}$. This proves that the policy evaluation step at each iteration $k$ takes $q_c^{(k)}$, as input and returns the $q_c^{(k)}$-induced value function $V_{q_c^{(k)}}$. What needs to be proved is the policy improvement step to show that $V_{q_c^{(k+1)}}(x)\geq V_{q_c^{(k)}}(x),\;\forall x\in\mathcal X$, when for all $x\in\mathcal X$ and $a\in\mathcal A$, we have

\vspace{-0.15in}
\begin{small}
\begin{equation*}
q_c^{(k+1)}(a|x) = \frac{\pi(a|x)\cdot \exp\big(Q_{q_c^{(k)}}(x,a)\big)}{\mathbb E_{a\sim\pi(\cdot|x)}\big[\exp\big(Q_{q_c^{(k)}}(x,a)\big)\big]},
\end{equation*}
\end{small}
\vspace{-0.15in}

and

\vspace{-0.15in}
\begin{small}
\begin{equation*}
Q_{q_c^{(k)}}(x,a) = \eta\cdot r(x,a) + \log\mathbb E_{x'\sim p(\cdot|x,a)}\big[\exp\big(V_{q_c^{(k)}}(x')\big)\big].
\end{equation*}
\end{small}
\vspace{-0.15in}

% \noindent{\textbf{Policy Evaluation}}:
% Consider the $q_c$-induced Bellman operator $\mathcal T_{q_c}$ in Eq.~\ref{eq:induced-operator1} and a state-action value function
% $V_0: \mathcal X \rightarrow \mathbb R$ with finite state and action spaces $\mathcal X\times\mathcal A$, and define $V_{k+1} = \mathcal T_{q_c}[V_k]$
% . Then the sequence $V_k$ will converge to the optimistic entropy-regularized value function of $q_c$, i.e., 
% \[
% \lim_{k\rightarrow\infty}V_k(x)= \max_{q_d\in\Delta_{\mathcal X}} \! \mathbb E\big[\sum_{t=0}^{T-1}\eta\cdot r(\obs_t,\control_t) - \log\frac{q_c(a_t|x_t)}{\pi(a_t|x_t)} - \log\frac{q_d(x_{t+1}|x_t,a_t)}{p(x_{t+1}|x_t,a_t)}\mid p_0,q_d,q_c, x_0=x\big].
% \]
% \begin{proof}
% Define the entropy augmented reward as $r_{q_c,\pi,p}(x, a,x') :=\eta\cdot r(x,a) - \log({q_c(a|x)}/{\pi(a|x)})- \log({q_d(x'|x,a)}/{p(x'|x,a)})$ and rewrite the update
% rule as
% \[
% V(x) \leftarrow \max_{q_d\in\Delta_{\mathcal X}}\mathbb E_{x'\sim q_d(\cdot|x,a),a\sim q_c(\cdot|x)}\left[r_{q_c,\pi,p}(x, a,x') + V(x')\right]
% \]
% and apply the standard convergence results for optimistic/robust policy evaluation \citep{iyengar2005robust}. The assumption of finite state and action spaces guarantees that both the entropy regularization terms on policies and dynamics are bounded.
% \end{proof}

\begin{proof}
Since from (\ref{eq:posterior-policy}), for all $x\in\mathcal X$ and $a\in\mathcal A$, we have

\vspace{-0.15in}
\begin{small}
\begin{equation*}
q_c^{(k+1)}(a|x) = \argmax_{q_c}\mathbb E_{a\sim q_c(\cdot|x)}\big[Q_{q_c^{(k)}}(x,a) - \log\frac{q_c(a|x)}{\pi(a|x)}\big],
\end{equation*}
\end{small}
\vspace{-0.15in}

for all $x\in\mathcal X$, we may write 

\vspace{-0.15in}
\begin{small}
\begin{align}
\mathbb E_{a\sim q_c^{(k+1)}(\cdot|x)}\big[Q_{q_c^{(k)}}(x,a) - \log\frac{q_c^{(k+1)}(a|x)}{\pi(a|x)}\big] &\geq \mathbb E_{a\sim q_c^{(k)}(\cdot|x)}\big[Q_{q_c^{(k)}}(x,a) - \log\frac{q_c^{(k)}(a|x)}{\pi(a|x)}\big] \nonumber \\
&\stackrel{\text{(a)}}{=} \mathcal T_{q_c^{(k)}}[V_{q_c^{(k)}}](x) = V_{q_c^{(k)}}(x).
\label{eq:temp0}
\end{align}
\end{small}
\vspace{-0.15in}

{\bf (a)} This is from (\ref{eq:induced-operator1}).

We know that if we start from any arbitrary value function $V$ and iteratively apply $\mathcal T_{q_c^{(k+1)}}$, we will convergence to $V_{q_c^{(k+1)}}$. If we start from $V=V_{q_c^{(k)}}$, for all $x\in\mathcal X$, we have

\vspace{-0.15in}
\begin{small}
\begin{align*}
V_{q_c^{(k+1)}}(x) &= \lim_{n\rightarrow\infty}\mathcal T^n_{q_c^{(k+1)}}[V_{q_c^{(k)}}](x) = \lim_{n\rightarrow\infty}\mathcal T^{n-1}_{q_c^{(k+1)}}\Big[\mathcal T_{q_c^{(k+1)}}[V_{q_c^{(k)}}]\Big](x) \\
&\stackrel{\text{(a)}}{=} \lim_{n\rightarrow\infty}\mathcal T^{n-1}_{q_c^{(k+1)}}\Big[\mathbb E_{a\sim q_c^{(k+1)}(\cdot|x)}\big[Q_{q_c^{(k)}}(x,a) - \log\frac{q_c^{(k+1)}(a|x)}{\pi(a|x)}\big]\Big] \\
&\stackrel{\text{(b)}}{\geq} \lim_{n\rightarrow\infty}\mathcal T^{n-1}_{q_c^{(k+1)}}[V_{q_c^{(k)}}](x) \geq \ldots \geq \mathcal T_{q_c^{(k+1)}}[V_{q_c^{(k)}}](x) \geq V_{q_c^{(k)}}(x).
\end{align*}
\end{small}
\vspace{-0.15in}

{\bf (a)} This is by replacing $\mathcal T_{q_c^{(k+1)}}[V_{q_c^{(k)}}](x)$ with $\mathbb E_{a\sim q_c^{(k+1)}(\cdot|x)}\big[Q_{q_c^{(k)}}(x,a) - \log\frac{q_c^{(k+1)}(a|x)}{\pi(a|x)}\big]$ from (\ref{eq:induced-operator1}).

{\bf (b)} This is from (\ref{eq:temp0}) and the monotonicity of the operator $\mathcal T_{q_c^{(k+1)}}$ from Lemma~\ref{lem:qc-induced-optimal-operators}. 

This concludes the proof.
\end{proof}

The policy improvement step of the model-based and model-free PI algorithms discussed in Section~\ref{subsec:PI-VI-Algos} perform the update of Eq.~\ref{eq:posterior-policy-closed-form}. Calculating the denominator of this update when the number of actions is large or infinite (continuous action space) could not be done efficiently. In this case, similar to a number of algorithms in the literature (e.g.,~soft actor-critic), we replace the update of Eq.~\ref{eq:posterior-policy-closed-form} with the following KL minimization:

\vspace{-0.15in}
\begin{small}
\begin{equation}
\label{eq:KL-PI-Improvement}
q_c^{(k+1)}(\cdot|x) = \argmin_{q_c\in\Delta_{\mathcal A}}\;\text{KL}\left(q_c(\cdot|x)\;||\;\frac{\pi(\cdot|x)\cdot \exp\big(Q_{q_c^{(k)}}(x,\cdot)\big)}{\mathbb E_{a\sim\pi(\cdot|x)}\big[\exp\big(Q_{q_c^{(k)}}(x,a)\big)\big]}\right), \quad \forall x\in\mathcal X.
\end{equation}
\end{small}
\vspace{-0.15in}

Now in the following corollary, we prove that even in this case we will see policy improvement, and thus, the algorithms will eventually converge to the optimal variational distributions $q^*=(q_c^*,q_d^*)$. 

\begin{corollary}
\label{corr:KL-PI-Improvement}
Let at iteration $k$, the variational policy, $q_c^{(k+1)}$, is computed as the exact solution to the KL optimization (\ref{eq:KL-PI-Improvement}). Then, we have $V_{q_c^{(k+1)}}(x)\geq V_{q_c^{(k)}}(x),\;\forall x\in\mathcal X$.
\end{corollary}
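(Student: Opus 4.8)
The plan is to reduce this corollary to the policy-improvement argument already carried out in the proof of Lemma~\ref{lem:policy_iteration}: once the single-step improvement inequality (\ref{eq:temp0}) is re-established for the KL-based update, the recursive monotonicity argument there applies verbatim. All the content is therefore concentrated in showing that the KL minimizer $q_c^{(k+1)}$ of (\ref{eq:KL-PI-Improvement}) satisfies
\[
\mathbb E_{a\sim q_c^{(k+1)}(\cdot|x)}\Big[Q_{q_c^{(k)}}(x,a) - \log\tfrac{q_c^{(k+1)}(a|x)}{\pi(a|x)}\Big] \geq V_{q_c^{(k)}}(x), \quad \forall x\in\mathcal X.
\]

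First I would expand the KL objective. Writing $Z(x):=\mathbb E_{a\sim\pi(\cdot|x)}[\exp(Q_{q_c^{(k)}}(x,a))]$ for the ($q_c$-independent) normalizer and using $\log q_c^Q(a|x)=\log\pi(a|x)+Q_{q_c^{(k)}}(x,a)-\log Z(x)$, a direct computation yields the identity
\[
\text{KL}\big(q_c(\cdot|x)\,\|\,q_c^Q(\cdot|x)\big) = \log Z(x) - \mathbb E_{a\sim q_c(\cdot|x)}\Big[Q_{q_c^{(k)}}(x,a)-\log\tfrac{q_c(a|x)}{\pi(a|x)}\Big].
\]
The point of this identity is that minimizing the KL over $q_c$ is exactly equivalent to maximizing the regularized objective appearing in (\ref{eq:posterior-policy}), with the intractable normalizer $\log Z(x)$ dropping out as an additive constant.

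Next I would combine this identity with the optimality of $q_c^{(k+1)}$. Since $q_c^{(k)}$ is feasible in the minimization (\ref{eq:KL-PI-Improvement}), we have $\text{KL}(q_c^{(k+1)}\|q_c^Q)(x)\le \text{KL}(q_c^{(k)}\|q_c^Q)(x)$; substituting the identity on both sides and cancelling the common $\log Z(x)$ flips this into precisely the single-step inequality displayed above, because by (\ref{eq:induced-operator1}) and the fixed-point property of Lemma~\ref{lem:qc-induced-optimal-operators} the right-hand side equals $\mathcal T_{q_c^{(k)}}[V_{q_c^{(k)}}](x)=V_{q_c^{(k)}}(x)$. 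This recovers (\ref{eq:temp0}).

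Finally, with (\ref{eq:temp0}) in hand I would repeat the tail of the proof of Lemma~\ref{lem:policy_iteration} unchanged: start the iteration $\mathcal T_{q_c^{(k+1)}}^n$ from $V_{q_c^{(k)}}$, use (\ref{eq:induced-operator1}) to rewrite the first application as the left-hand side of (\ref{eq:temp0}), and invoke monotonicity of $\mathcal T_{q_c^{(k+1)}}$ (Lemma~\ref{lem:qc-induced-optimal-operators}) to propagate the inequality through all $n$, letting $n\to\infty$ to reach the fixed point $V_{q_c^{(k+1)}}$. I expect the only genuinely delicate point to be the bookkeeping in the identity above (correctly isolating $\log Z(x)$ as $q_c$-independent); once that is settled the improvement is immediate, and the feasibility of $q_c^{(k)}$ in (\ref{eq:KL-PI-Improvement}) is the single hypothesis that makes the comparison valid.
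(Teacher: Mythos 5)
Your proposal is correct and follows essentially the same route as the paper: expand the KL to isolate the $q_c$-independent normalizer $\log Z(x)$, use feasibility of $q_c^{(k)}$ in the minimization to recover the single-step inequality (\ref{eq:temp0}), and then invoke the monotonicity argument from the proof of Lemma~\ref{lem:policy_iteration}. If anything, your explicit identity for $\text{KL}\big(q_c\,\|\,q_c^Q\big)$ is a cleaner bookkeeping of the cancellation than the paper's version.
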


\begin{proof}
Since $q_c^{(k+1)}$ is the minimizer of (\ref{eq:KL-PI-Improvement}), we may write

\vspace{-0.15in}
\begin{small}
\begin{equation}
\label{eq:temp000}
\text{KL}\left(q_c^{(k+1)}(a|x)\;||\;\frac{\pi(a|x)\cdot \exp\big(Q_{q_c^{(k)}}(x,a)\big)}{\mathbb E_{a\sim\pi(\cdot|x)}\big[\exp\big(Q_{q_c^{(k)}}(x,a)\big)\big]}\right) \leq \text{KL}\left(q_c^{(k)}(a|x)\;||\;\frac{\pi(a|x)\cdot \exp\big(Q_{q_c^{(k)}}(x,a)\big)}{\mathbb E_{a\sim\pi(\cdot|x)}\big[\exp\big(Q_{q_c^{(k)}}(x,a)\big)\big]}\right).
\end{equation}
\end{small}
\vspace{-0.15in}

Let $Z_{q_c^{(k)}}(x) := E_{a\sim\pi(\cdot|x)}\big[\exp\big(Q_{q_c^{(k)}}(x,a)\big)\big]$, then we may rewrite (\ref{eq:temp000}) as

\vspace{-0.15in}
\begin{small}
\begin{align*}
\mathbb E_{a\sim q_c^{(k+1)}(\cdot|x)}\big[Q_{q_c^{(k)}}(x,a) &- \log\frac{q_c^{(k+1)}(a|x)}{\pi(a|x)}\big] - \mathbb E_{a\sim q_c^{(k+1)}}\big[Z_{q_c^{(k)}}(x)\big] \geq \\
&\mathbb E_{a\sim q_c^{(k)}(\cdot|x)}\big[Q_{q_c^{(k)}}(x,a) - \log\frac{q_c^{(k)}(a|x)}{\pi(a|x)}\big] - \mathbb E_{a\sim q_c^{(k)}}\big[Z_{q_c^{(k)}}(x)\big].
\end{align*}
\end{small}
\vspace{-0.15in}

Since $Z_{q_c^{(k)}}(\cdot)$ is a function of $x$, we have 

\vspace{-0.15in}
\begin{small}
\begin{align*}
\mathbb E_{a\sim q_c^{(k+1)}(\cdot|x)}\big[Q_{q_c^{(k)}}(x,a) &- \log\frac{q_c^{(k+1)}(a|x)}{\pi(a|x)}\big] - Z_{q_c^{(k)}}(x) \geq \\ 
&\mathbb E_{a\sim q_c^{(k)}(\cdot|x)}\big[Q_{q_c^{(k)}}(x,a) - \log\frac{q_c^{(k)}(a|x)}{\pi(a|x)}\big] - Z_{q_c^{(k)}}(x).
\end{align*}
\end{small}
\vspace{-0.15in}

Thus, the $Z$ terms are removed from both sides of the above inequality and we return to Eq.~\ref{eq:temp0} in the proof of Lemma~\ref{lem:policy_iteration}. The rest of the proof is similar to that of Lemma~\ref{lem:policy_iteration}. 
\end{proof}

%%%%%%%%%%%%%%%%%%%%%%%%%%%%%%%%%%%%%%%%%%%%%%%%%%%%%%%%%%%%%%%%%%%%%%%%%%
%%%%%%%%%%%%%%%%%%%%%%%%%%%%%%%%%%%%%%%%%%%%%%%%%%%%%%%%%%%%%%%%%%%%%%%%%%
%%%%%%%%%%%%%%%%%%%%%%%%%%%%%%%%%%%%%%%%%%%%%%%%%%%%%%%%%%%%%%%%%%%%%%%%%%
%%%%%%%%%%%%%%%%%%%%%%%%%%%%%%%%%%%%%%%%%%%%%%%%%%%%%%%%%%%%%%%%%%%%%%%%%%
%%%%%%%%%%%%%%%%%%%%%%%%%%%%%%%%%%%%%%%%%%%%%%%%%%%%%%%%%%%%%%%%%%%%%%%%%%

\newpage
\section{Pseudo-code of VMBPO}
\label{sec:VMBPO-algo}

This section contains the pseudo-code of our variational model-based policy optimization (VMBPO) algorithm, whose E-step and M-step have been described in details in Sections~\ref{subsec:VMBPO-E-step} and~\ref{subsec:VMBPO-M-step}.

\begin{algorithm}[H]
\begin{small}
\caption{Variational Model-based Policy Optimization (VMBPO)}
\label{alg:VMBPO}
\begin{algorithmic}[1]
\STATE {\bf Inputs}: replay buffer $\mathcal{D}$; $\;\;$ neural networks representing variational dynamics $\theta_d$, variational policy $\theta_c$, log-likelihood ratio $\theta_\nu$, value function $\theta_v$, action-value function $\theta_q$, target value function $\theta'_v$, target action-value function $\theta'_q$, baseline policy $\theta_\pi$;
\FOR{$t = 1,2,\ldots$}
\FOR{a number of interactions with the environment}
\STATE Observe state $x$; $\quad$ Take action $a \sim \pi(\cdot|x;\theta_\pi)$; $\quad$ Observe $r=r(x,a)\;\wedge\;x'\sim p(\cdot|x,a)$; 
\STATE Update the buffer $\mathcal{D} \leftarrow \mathcal{D} \cup (x,a,r,x^\prime)$; $\quad$ Replace $x\leftarrow x'$; 
\ENDFOR
\STATE {\color{gray}\# E-step $\qquad$ ($K$ is the number of E-step iterations)}
\FOR{$k = 1,\ldots,K$} 
\STATE {\color{gray}\# Step~1 $\qquad$ (updating variational dynamics $q_d$)}
\STATE Sample a number of $(x,a,r,x')\sim\mathcal D$; $\quad$ Update $q_d$ parameter $\theta_d$ using gradient of (\ref{eq:qd-update3});
\STATE {\color{gray}\# Step~2 $\qquad$ (updating log-likelihood ratio $\nu=\log(q_d/p)$)}
\STATE Sample a number of $(x,a,x')\sim\mathcal D$; $\quad$ Sample $x'\sim q_d$ for the same $(x,a)$;
\STATE Update $\nu$ parameter $\theta_\nu$ using gradient of (\ref{eq:nu_update-2});
\STATE {\color{gray}\# Step~3 $\qquad$ (critic update $\;$ -- $\;$ updating $V_{q_c}$ and $Q_{q_c}$)}
\STATE Sample a number of $(x,a,r,x')$ from the model; \hfill {\color{gray}\# $a\sim q_c(\cdot|x),\;x'\sim q_d(\cdot|x,a)$} 
\STATE Update $Q$ parameter $\theta_q$ using gradient of (\ref{eq:critic-Q-2});
\STATE Sample a number of $(x,a)$ from the model; \hfill {\color{gray}\# $a\sim q_c(\cdot|x)$}
\STATE Update $V$ parameter $\theta_v$ using gradient of (\ref{eq:critic-V-2});
\STATE {\color{gray}\# Step~4 $\qquad$ (actor update $\;$ -- $\;$ updating $q_c$ $\;$ -- $\;$ policy improvement)}
\STATE Update $q_c$ parameter $\theta_c$ either using gradient of (\ref{eq:actor-update-2}) or by solving (\ref{eq:posterior-policy-closed-form}) in closed-form; 
\STATE {\color{gray}\# target networks $\theta_v',\theta'_q$ are set to an exponentially moving average of the value networks $\theta_v,\theta_q$}
\STATE $\theta'_v \leftarrow \tau\theta_v + (1-\tau)\theta'_v$; $\quad$ $\theta'_q \leftarrow \tau\theta_q + (1-\tau)\theta'_q$; $\quad$ 
\ENDFOR
\STATE {\color{gray}\# M-step $\qquad$ (updating the baseline policy $\pi$)}
\STATE Update baseline policy $\pi$ parameter $\theta_\pi$ either by setting it to $\theta_c$ or by solving the MAP problem (\ref{eq:M-step-Opt2})
\ENDFOR
\end{algorithmic}
\end{small}
\end{algorithm}

%%%%%%%%%%%%%%%%%%%%%%%%%%%%%%%%%%%%%%%%%%%%%%%%%%%%%%%%%%%%%%%%%%%%%%%%%%
%%%%%%%%%%%%%%%%%%%%%%%%%%%%%%%%%%%%%%%%%%%%%%%%%%%%%%%%%%%%%%%%%%%%%%%%%%
%%%%%%%%%%%%%%%%%%%%%%%%%%%%%%%%%%%%%%%%%%%%%%%%%%%%%%%%%%%%%%%%%%%%%%%%%%
%%%%%%%%%%%%%%%%%%%%%%%%%%%%%%%%%%%%%%%%%%%%%%%%%%%%%%%%%%%%%%%%%%%%%%%%%%
%%%%%%%%%%%%%%%%%%%%%%%%%%%%%%%%%%%%%%%%%%%%%%%%%%%%%%%%%%%%%%%%%%%%%%%%%%

\newpage
\section{E-step with Model-free Policy Iteration}
\label{sec:model-free-E-step-Algo}
In Sec.~\ref{subsec:PI-VI-Algos}, we described a model-free PI algorithm for solving the E-step of our variational formulation. In this algorithm, $q_d$ is computed at the end of the E-step, and thus, only used to generate samples in the M-step. We call the RL-version of this algorithm, whose E-step is model-free and M-step is model-based, VMBPO with model-free E-step (VMBPO-MFE). In VMBPO-MFE, we first estimate $Q_{q_c}$ using (\ref{eq:Q-function}) and then approximate $V_{q_c}$ by minimizing the (fixed-point) loss $(V-\mathcal T_{q_c}V)^2$, where $\mathcal T_{q_c}$ is computed from (\ref{eq:induced-operator1}) by setting $Q$ equal to the target action-value network $Q'$. The $q_c$ update (policy improvement) is exactly the same as the actor update (Step~4) of VMBPO, described in Sec.~\ref{subsec:VMBPO-E-step}. After several E-step updates, $q_d$ is computed from (\ref{eq:posterior-dynamics-closed-form}). This is followed by the M-step, which is identical to that of VMBPO, described in Sec.~\ref{subsec:VMBPO-M-step}. We report the details of VMBPO-MFE and its pseudo-code in Appendix~\ref{sec:model-free-E-step-Algo}. Although VMBPO-MFE is less complex than VMBPO, our experiments in Sec.~\ref{sec:experiments} show that it is less sample efficient (i.e.,~achieves worse performance than VMBPO with the same number of real samples). The main reason for this is that VMBPO uses simulated samples in both E and M steps, while VMBPO-MFE only uses them in the M-step. Moreover, our experiments show that VMBPO-MFE may degenerate in certain cases, due to the instability of the exponential temporal difference (TD) learning in the critic step. The $\log$ expectation term in (\ref{eq:Q-function}) creates challenges for finding an unbiased empirical loss to estimate $Q$, and when it is removed by taking exponential from both sides of (\ref{eq:Q-function}), the resulting exponential terms cause numerical instability in the updates.

    \begin{algorithm}[H]
\begin{small}
\caption{Variational Model-based Policy Optimization with Model-free E-step (VMBPO-MFE)}
\label{alg:VMBPO_MFE}
\begin{algorithmic}[1]
\STATE {\bf Inputs}: replay buffer $\mathcal{D}$; $\;\;$ neural networks representing variational policy $\theta_c$, value function $\theta_v$, action-value function $\theta_q$, target value function $\theta'_v$, target action-value function $\theta'_q$, baseline policy $\theta_\pi$;
\FOR{$t = 1,2,\ldots$}
\FOR{a number of interactions with the environment}
\STATE Observe state $x$; $\quad$ Take action $a \sim \pi(\cdot|x;\theta_\pi)$; $\quad$ Observe $r=r(x,a)\;\wedge\;x'\sim p(\cdot|x,a)$; 
\STATE Update the buffer $\mathcal{D} \leftarrow \mathcal{D} \cup (x,a,r,x^\prime)$; $\quad$ Replace $x\leftarrow x'$; 
\ENDFOR
\STATE {\color{gray}\# E-step $\qquad$ ($K$ is the number of E-step iterations)}
\FOR{$k = 1,\ldots,K$} 
\STATE {\color{gray}\# Step~1 $\qquad$ (critic update $\;$ -- $\;$ updating $V_{q_c}$ and $Q_{q_c}$)}
\STATE Sample a number of $(x,a,r,x')$ from $\mathcal D$; \hfill {\color{gray}\# $a\sim q_c(\cdot|x),\;x'\sim p(\cdot|x,a)$} 
\STATE Update $Q$ parameter $\theta_q$ using gradient of (\ref{eq:vmbpo_mfe_critic_update_q});
\STATE Sample a number of $(x,a)$ from the $\mathcal D$; \hfill {\color{gray}\# $a\sim q_c(\cdot|x)$}
\STATE Update $V$ parameter $\theta_v$ using gradient of (\ref{eq:vmbpo_mfe_critic_update_v});
\STATE {\color{gray}\# Step~2 $\qquad$ (actor update $\;$ -- $\;$ updating $q_c$ $\;$ -- $\;$ policy improvement)}
\STATE Update $q_c$ parameter $\theta_c$ either using gradient of (\ref{eq:actor-update-2}) or by solving (\ref{eq:posterior-policy-closed-form}) in closed-form; 
\STATE {\color{gray}\# target networks $\theta_v',\theta'_q$ are set to an exponentially moving average of the value networks $\theta_v,\theta_q$}
\STATE $\theta'_v \leftarrow \tau\theta_v + (1-\tau)\theta'_v$; $\quad$ $\theta'_q \leftarrow \tau\theta_q + (1-\tau)\theta'_q$; $\quad$ 
\ENDFOR
\STATE {\color{gray}\# M-step $\qquad$ (updating the baseline policy $\pi$)}
\STATE Update baseline policy $\pi$ parameter $\theta_\pi$ either by setting it to $\theta_c$
\ENDFOR
\end{algorithmic}
\end{small}
\end{algorithm}

\subsection{The Model-free E-step Critic Update}\label{sec:vmbpo_mfe_critic_update}
Suppose we have access to the policy $\pi$, and posterior policy $q_c$, we now aim to learn the corresponding value functions (critic) $V_{\pi,q_c}$ and $Q_{\pi,q_c}$. Recall from Lemma \ref{lem:optimal-operator1}, we know that $V_{\pi,q_c}$ is a unique solution of fixed-point equation $\mathcal T_{q_c}[V](x)=V(x)$, $\forall x\in\mathcal X$. Suppose we parameterize $V_{\pi,q_c}(x)$ with function approximation $\widehat V_{\pi,q_c}(x;\kappa)$, and similarly $Q_{\pi,q_c}(x,a)$ with $\widehat Q_{\pi,q_c}(x,a;\xi)$. Similar to soft DQN, one way to learn $\widehat V_{\pi,q_c}(x;\kappa)$ and $\widehat Q_{\pi,q_c}(x,a;\xi)$ is by minimizing the following objective function respectively, over the data from the replay buffer $\mathcal D$ sampled from the environment:
\begin{small}
\begin{equation}\label{eq:vmbpo_mfe_critic_update_q}
\theta^*_q\leftarrow\arg\min_\theta \sum_{(x, a, r, x')\sim\mathcal D}\!\!\left(\!\exp\left(\widehat Q_{\pi,q_c}(x,a;\theta)\!-\!\eta r\!-\widehat V_{\pi,q_c}(x';\theta_v^{\prime}\!)\right)- 1\right)^2, 
\end{equation}
\end{small}
where $V_{\pi,q_c}(x';\theta_v^{\prime})$ is a \emph{target} Q-function, and
\begin{small}
\begin{equation}\label{eq:vmbpo_mfe_critic_update_v}
\theta^*_v\leftarrow\arg\min_{\theta} \sum_{(x, a, r, x')\sim\mathcal D}\left(\widehat V_{\pi,q_c}(x;\theta) - \int_{\control\in\mathcal{A}}q_c(\control|\obs)\left(\widehat Q_{\pi,q_c}(x,a;\theta_q) - \log\frac{q_c(\control|\obs)}{\pi(\control|\obs)}\right)\right)^2,
\end{equation}
\end{small}

The above learning is completely \emph{off-policy}---the target is valid no matter how the experience was generated (as long as it
is sufficiently exploratory). Under this loss, critic learning can be viewed as $\ell_2$-regression of $\exp(\widehat Q_{\pi,q_c}(x,a;\theta_q)-\eta r-V_{\pi,q_c}(x;\theta_v))$ w.r.t. the target label $1$, such that the value function $V_{\pi,q_c}(x;\theta_v)$ is learned to minimize the the \emph{mean squared Bellman error}: $(\mathcal T_{q_c}[V](x)-V(x))^2$ and to enforce $\widehat Q_{\pi,q_c}(x,a;\theta_q)=\eta\cdot r(\obs,\control)+\log\int_{x'\in\mathcal X}P(x'|x,a)\exp \widehat V_{\pi,q_c}(x';\theta_v)$.

%%%%%%%%%%%%%%%%%%%%%%%%%%%%%%%%%%%%%%%%%%%%%%%%%%%%%%%%%%%%%%%%%%%%%%%%%%
%%%%%%%%%%%%%%%%%%%%%%%%%%%%%%%%%%%%%%%%%%%%%%%%%%%%%%%%%%%%%%%%%%%%%%%%%%
%%%%%%%%%%%%%%%%%%%%%%%%%%%%%%%%%%%%%%%%%%%%%%%%%%%%%%%%%%%%%%%%%%%%%%%%%%
%%%%%%%%%%%%%%%%%%%%%%%%%%%%%%%%%%%%%%%%%%%%%%%%%%%%%%%%%%%%%%%%%%%%%%%%%%
%%%%%%%%%%%%%%%%%%%%%%%%%%%%%%%%%%%%%%%%%%%%%%%%%%%%%%%%%%%%%%%%%%%%%%%%%%
%%%%%%%%%%%%%%%%%%%%%%%%%%%%%%%%%%%%%%%%%%%%%%%%%%%%%%%%%%%%%%%%%%%%%%%%%%
%%%%%%%%%%%%%%%%%%%%%%%%%%%%%%%%%%%%%%%%%%%%%%%%%%%%%%%%%%%%%%%%%%%%%%%%%%
%%%%%%%%%%%%%%%%%%%%%%%%%%%%%%%%%%%%%%%%%%%%%%%%%%%%%%%%%%%%%%%%%%%%%%%%%%
%%%%%%%%%%%%%%%%%%%%%%%%%%%%%%%%%%%%%%%%%%%%%%%%%%%%%%%%%%%%%%%%%%%%%%%%%%
%%%%%%%%%%%%%%%%%%%%%%%%%%%%%%%%%%%%%%%%%%%%%%%%%%%%%%%%%%%%%%%%%%%%%%%%%%

\newpage
\section{Experimental Details}\label{appendix:experimental_details}
\begin{table}[h]
\centering
\begin{tabular}{|l|c|c|c|}
\hline
\textbf{Environment} & \textbf{State dimension} & \textbf{Action dimension} \\ [0.5ex]
\hline
\hline
Pendulum & 3 & 1 \\
\hline
Reacher & 11 & 2 \\
\hline
Hopper & 11 & 3 \\
\hline
Reacher7DoF & 14 & 7 \\
\hline
Walker2D & 17 & 6 \\
\hline
HalfCheetah & 17 & 6 \\
\hline
\end{tabular}
\caption{State and Action dimensions of various benchmark environments.}
\label{table:benchmark}
\end{table}

\begin{table}[h]
\centering
\begin{tabular}{|l|c|}
\hline
\textbf{Hyper Parameters for MBPO and VMBPO} & \textbf{Value(s)} \\ [0.5ex]
\hline
\hline
Discount Factor & 0.99 \\
\hline
Number of Model Ensemble Networks & 7 \\
\hline
Number of Expert Networks & 2 \\
\hline
Number of Q Ensemble Networks & 2 \\
\hline
Dynamics Model Network Architecture & MLP with 4 hidden layers of size 200 \\
\hline
Critic Network Architecture & MLP with 2 hidden layers of size 200 \\
\hline
Actor Network Architecture & MLP with 2 hidden layers of size 200 \\
\hline
Exploration policy & $\mathcal{N}(0, \sigma=1)$ \\
\hline
Exploration noise ($\sigma$) decay & 0.999 \\
\hline
Exploration noise ($\sigma$) minimum & 0.025 \\
\hline
Temperature & 0.99995 \\
\hline
Soft target update rate ($\tau$) & 0.005 \\
\hline
Replay memory size (Both $\mathcal D$, $\mathcal E$) & $10^6$ \\
\hline
Mini-batch size (AC) & 64 \\
\hline
Mini-batch size (Model-learning) & 256 \\
\hline
Model learning rate & 0.0003 \\
\hline
Critic learning rates & 0.001, 0.0005, 0.0002 \\
\hline
Actor learning rates & 0.0005, 0.0002, 0.0001 \\
\hline
Neural network optimizer & Adam \\
\hline
\end{tabular}
\caption{Hyper parameters settings for MBPO and VMBPO. 
         We sweep over the critic learning rates and actor learning rates for tuning.}
\label{table:hyper_params}
\end{table}

For baseline algorithms, we either use the code the authors open-sourced or implement on our own and deliberately use the configurations shown in the literature. Among them, Table~\ref{table:hyper_params} shows the hyper parameters for MBPO and VMBPO in more detail.

\subsection{Additional Experimental Results}\label{appendix:additional_exp}
\begin{table}[th!]
\begin{adjustwidth}{-.5in}{-.5in}
\centering
\scalebox{0.7}{
\begin{tabular}{|l|c|c|c|c|c|c|c|}
\hline
\textbf{Env.} & \textbf{VMBPO} & \textbf{MBPO} & \textbf{STEVE} & \textbf{PETS} & \textbf{VMBPO-MFE} & \textbf{SAC} & \textbf{MPO} \\ [0.5ex]
\hline
\hline
Pendulum & -147.4 $\pm$ 94.1 & \textbf{-146.8} $\pm$ 272.6 & --- & --- & -511.9 $\pm$ 384.4 & \textbf{-146.8} $\pm$ 450.6 & -605.2 $\pm$ 389.6  \\
\hline
Hopper & \textbf{2137.2} $\pm$ 1016.6 & 1689.5 $\pm$ 934.5 & --- & --- & 485.4 $\pm$ 389.3 & 1262.2 $\pm$ 803.3 & 780.8 $\pm$ 629.6 \\
\hline
Walker2D & \textbf{2817.6} $\pm$ 1076.1 & 2356.4 $\pm$ 1104.3 & --- & --- & 1447.1 $\pm$ 767.1 & 1341.6 $\pm$ 1092.6 & 1590.3 $\pm$ 860.7 \\
\hline
HalfCheetah & \textbf{8644.6} $\pm$ 3291.1 & 7573.4 $\pm$ 4056.9 & --- & --- & 2834.6 $\pm$ 1062.9 & 6312.0 $\pm$ 2299.7 & 3258.2 $\pm$ 970.1 \\
\hline
Reacher & \textbf{-13.5} $\pm$ 38.7 & -17.5 $\pm$ 44.8 & --- & --- & -122.2 $\pm$ 507.0 & -77.2 $\pm$ 50.6 & -168.2 $\pm$ 477.1 \\
\hline
Reacher7DoF & \textbf{-15.2} $\pm$ 66.4 & -17.2 $\pm$ 101.6 & --- & --- & -78.9 $\pm$ 439.1 & -114.2 $\pm$ 196.9 & -93.8 $\pm$ 426.9\\
\hline
\end{tabular}
}
\end{adjustwidth}
\caption{The mean $\pm$ SD of final returns over all hyper-parameter configurations. VMBPO is more robust to hyper-parameter configurations than other baselines. We do not include PETS and STEVE because the hyper-parameter configurations are directly adopted from their papers.}
\label{table:exp1_all_mean}
\end{table}

\begin{figure}[th!]
\centering
\includegraphics[width=\textwidth]{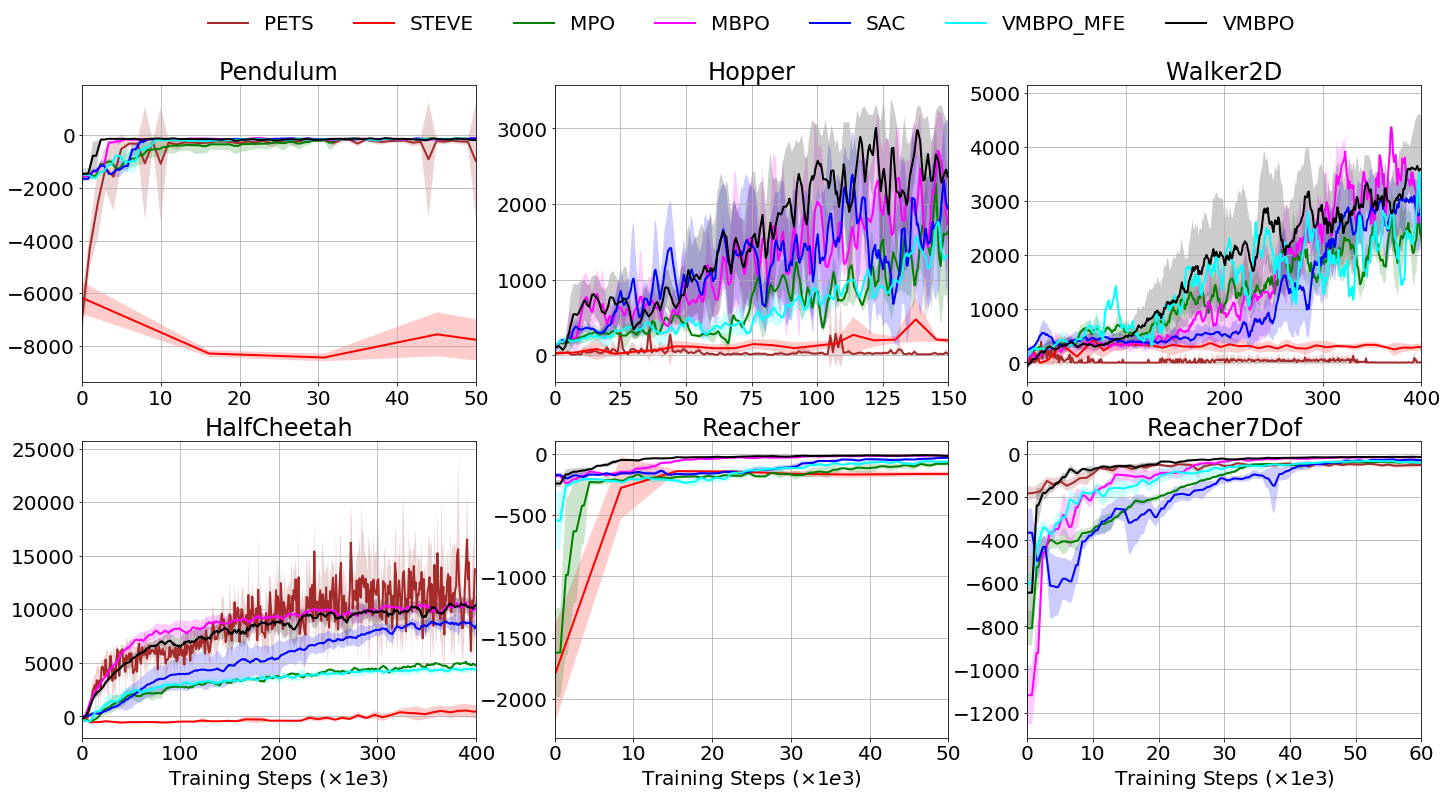}
\caption{Mean cumulative reward of the best hyper parameter configuration over 5 random seeds.}
\label{fig:exp1_best_mean}
\end{figure}

\end{document}